\title{Rate-adaptive model selection over a collection of black-box contextual bandit algorithms
}
\author{%
  Aurélien F. Bibaut \\
   \And
   Antoine Chambaz
   \And 
   Mark J. van der Laan
}
\newtheorem{theorem}{Theorem}
\newtheorem{theorem*}{Theorem}
\newtheorem{corollary}{Corollary}
\newtheorem{assumption}{Assumption}
\newtheorem{remark}{Remark}
\newtheorem{definition}{Definition}
\newtheorem{lemma}{Lemma}
\newcommand{\Ind}{\textbf{1}}
\newcommand{\Ref}{\mathrm{ref}}
\newcommand{\xplr}{\mathrm{xplr}}
\newcommand{\xplt}{\mathrm{xplt}}
\newcommand{\argmin}{\mathop{\arg\min}}
\newcommand*\rel@kern[1]{\kern#1\dimexpr\macc@kerna}
\newcommand*\widebar[1]{%
  \begingroup
  \def\mathaccent##1##2{%
    \rel@kern{0.8}%
    \overline{\rel@kern{-0.8}\macc@nucleus\rel@kern{0.2}}%
    \rel@kern{-0.2}%
  }%
  \macc@depth\@ne
  \let\math@bgroup\@empty \let\math@egroup\macc@set@skewchar
  \mathsurround\z@ \frozen@everymath{\mathgroup\macc@group\relax}%
  \macc@set@skewchar\relax
  \let\mathaccentV\macc@nested@a
  \macc@nested@a\relax111{#1}%
  \endgroup
}
\begin{document}
\maketitle

\begin{abstract}
We consider the model selection task in the stochastic contextual bandit setting. Suppose we are given a collection of base contextual bandit algorithms. We provide a master algorithm that combines them and achieves the same performance, up to constants, as the best base algorithm would, if it had been run on its own. Our approach only requires that each algorithm satisfy a high probability regret bound.

Our  procedure is  very  simple and essentially does the following:  for  a  well chosen  sequence  of probabilities $(p_{t})_{t\geq 1}$,  at each round $t$, it  either chooses at random which candidate  to follow (with probability $p_{t}$) or  compares, at the
  same internal sample size for each candidate, the cumulative reward of each,
  and selects the one that wins the comparison (with probability $1-p_{t}$).

To the best of our knowledge, our proposal is the first one to be rate-adaptive for a collection of general black-box contextual bandit algorithms: it achieves the same regret rate as the best candidate.

We demonstrate the effectiveness of our method with simulation studies.
\end{abstract}

\section{Introduction}\label{section:introduction}

Contexual bandit (CB) learning is the repetition of the following steps, carried out by a an agent $\mathcal{A}$ and an environment $\mathcal{E}$.
\begin{enumerate}
\item the environment presents the agent a context $X \in \mathcal{X}$,
\item the agent chooses an action $A \in \{1,\ldots,K\}$,
\item the environment presents the learner the reward $Y$ corresponding to action $A$.
\end{enumerate}
The goal of the agent is to accumulate the highest possible cumulative reward over a certain number of rounds $T$. The relative performance of existing CB algorithms depends on the environment $\mathcal{E}$: for instance some algorithms are best suited for settings where the reward structure is linear (LinUCB), but can be outperformed by greedy algorithms when the reward structure is more complex. It would therefore be desirable to have a procedure that is able to identify, in a data-driven fashion, which one of a pool of base CB algorithms is best suited for the environment at hand. This task is referred to as model selection. In batch settings and online full information settings, model selection is a mature field, with developments spanning several decades \citep{Stone74, Lepski90, Lepski91, gyorfi2002, dudoit_vdL2005, massart2007, benkeser2018}. Cross-validation is now the standard approach used in practice, and it enjoys solid theoretical foundations \citep{devroye-lugosi2001, gyorfi2002, dudoit_vdL2005, benkeser2018}.

Literature on model selection in online learning under bandit feedback is more recent and sparser. This owes to challenges specific to the bandit setting.  Firstly, the bandit feedback structure implies that at any round, only the loss (here the negative reward) corresponding to one action can be observed, which implies that the loss can be observed only for a subset of the candidate learners (those which proposed the action eventually chosen). Any model selection procedure must therefore address the question of how to decide which base learner to follow at each round (the \textit{allocation challenge}), and how to pass feedback to the base learners (the \textit{feedback challenge}). A tempting approach to decide how to allocate rounds to different base learners is to use a standard multi-armed bandit (MAB) algorithm as a meta-learner, and treat the base learners as arms. This approach fails because, unlike in the usual MAB setting, the reward distribution of the arms changes with the number of times they get played: the more a base learner gets chosen, the more data it receives, and the better its proposed policy (and therefore expected reward) becomes. This exemplifies the \textit{comparability challenge}: how to compare the candidate learners based on the available data at any given time?

Existing approaches solve these challenges in differents ways. 
We saw essentially two types of solutions in the existing literature, represented on the one hand by the OSOM algorithm of \cite{chatterji2019osom} and the ModCB algorithm of \cite{foster2019}, and on the other hand by the CORRAL algorithm of \cite{agarwal17}, and the stochastic CORRAL algorithm, an improved version thereof introduced by \cite{pacchiano2020model}. 

In OSOM and ModCB, the base learners learn policies in policy classes that form a nested sequence, which can be ordered from least complex to most complex. Their solution to the \textit{allocation challenge} is to start by using the least complex algorithm, and move irreversibly to the next one if a goodness-of-fit test indicates its superiority. The goodness-of-fit tests uses all the data available to compute fits of the current and next policy, and compares them. This describes their solution to the \textit{feedback challenge} and the \textit{comparability challenge}. 

CORRAL variants take another route. They use an Online Mirror Descent (OMD) based master algorithm  that samples alternatively which base learner to follow, and gradually phases out the suboptimal ones. In that sense, their allocation strategy resembles the one of a MAB algorithm.  The \textit{comparability issue} arises naturally in the context of an OMD meta-learner, which can be understood easily with an example. Suppose that we have two base algorithms $\mathcal{A}(1)$ and $\mathcal{A}(2)$, and that $\mathcal{A}(1)$ has better asymptotic regret thant $\mathcal{A}(2)$. It can happen that either by chance ($\mathcal{A}(1)$ plays unlucky rounds) or by design (e.g. $\mathcal{A}(1)$ explores a lot in early rounds), $\mathcal{A}(1)$ fares worse than $\mathcal{A}(2)$ initially. As a result, the master would initially give a lesser weight to $\mathcal{A}(1)$ than to $\mathcal{A}(2)$, with the result that at some time $t$, the policy proposed by $\mathcal{A}(1)$ is based on a much smaller internal sample size than the policy proposed by $\mathcal{A}(2)$. As a result, at $t$, even though $\mathcal{A}(1)$ is asymptotically better than $\mathcal{A}(2)$, the losses of $\mathcal{A}(1)$ are worse than the losses of $\mathcal{A}(2)$, which accentuates the data-starvation of $\mathcal{A}(1)$ and can lead to $\mathcal{A}(1)$ never recovering from its early underperformance. The issue described here is that the losses used for the OMD weights update are not comparable across candidates, as they are based on policies informed by significantly different internal sample sizes. CORRAL can be viewed as the solution to the \textit{comparability challenge} in the context of an OMD master: by using gentle weight updates (as opposed to the more aggressive weight updates of Exp3 for instance) and by regularly increasing the learning rate of base learners of which the weight drops too low, CORRAL prevents the base algorithm data-starvation phenomenon. The two CORRAL variants differ in their solution to the \textit{feedback challenge}. 
The original CORRAL algorithm \citep{agarwal17} passes, at each round, importance weighted losses to the master and to all base learners. In contrast,\cite{pacchiano2020model}'s stochastic CORRAL passes unweighted losses to each base algorithm, but only at the time they get selected.

Guarantees in \cite{chatterji19a} and \cite{foster2019} rely on the so-called \textit{realizability} assumption, which states that at least one of the candidate policy classes contains $\pi_0(\mathcal{E})$, the optimal measurable policy under the current environment. \cite{chatterji19a} show that their approach achieves the minimax regret rate for the smallest policy class that contains $\pi_0(\mathcal{E})$. \cite{foster2019} consider linear policy classes and show that their algorithm achieves regret no larger than $\widetilde{\mathcal{O}}(T^{2/3} d_*^{1/3})$ and $\widetilde{\mathcal{O}}(T^{3/4} + \sqrt{T d_*})$ where $d^*$ is the dimension of the smallest policy class that contains $\pi_0(\mathcal{E})$. This is optimal if $d_* \geq \sqrt{T}$. In CORRAL variants, if one the the $J$ base algorihtms has regret $O(T^\alpha)$, the master achieves regret $\widetilde{\mathcal{O}}(J/T + T \eta +  T \eta^{(1-\alpha) / \alpha})$, with $\eta$ the initial learning rate of the master. The learning rate $\eta$ can be optimized so that this regret bound becomes $\widetilde{\mathcal{O}}(J^{1-\alpha} T^\alpha)$, that is, up to log factors, the upper bound on regret of that base algorithm. As pointed out  by \citet{agarwal17}, and as can be seen from the regret bound restated here, CORRAL presents an important caveat: the learning rate must be tuned to the rate of the base algorithm one wishes to compete with. This is not an issue when working with a collection of algorithms with same regret upper bound, and in that case CORRAL offers protection against model misspecification. However when base learners have different regret rates, CORRAL fails to adapt to the rate of the optimal algorithm.

In this article, we propose a master algorithm that allows to work with general off-the-shelf (contextual) bandit algorithhms, and achieves the same regret rate as the best of them. Our theoretical guarantees improve upon OSOM \citep{chatterji2019osom} and ModCB \citep{foster2019} in the sense that our algorithm allows to work with a general collection of bandit algorithms, as opposed to a collection of algorithms based on a nested sequence of parametric reward models. It improves upon CORRAL variants in the sense that it is rate-adaptive. 
Our master algorithm can be  described as follows: for a well chosen sequence $(p_{t})_{t\geq 1}$ of exploration probabilities, at each time $t$, the master either samples a base algorithm uniformly at random and follows its proposal (with probability $p_{t}$), or it picks the base algorithm that maximizes a  certain criterion based on past  performance (with an exploitation probability of $1-p_{t}$).
Each algorithm receives feedback only if it gets played by the master. The crucial idea is to compare the performance of base algorithms at the same internal time. At global time $t$, the $J$ algorithms are at internal times $n(1,t),\ldots, n(J,t)$ (with $n(1,t)+\ldots+ n(J,t) = t$). We compare them based on their $\underbar{n}(t) := \min_{j \in [J]} n(j,t)$ first rounds, thus ensuring a fair comparison.

We organize the article as follows. In section \ref{section:problem_setting}, we formalize the setting consisting of a master algorithm allocating rounds to base algorithms. In section \ref{section:alg_description}, we present our master algorithm, EnsBFC (Ensembling Bandits by Fair Comparison). We present its theoretical guarantees in section \ref{section:regret_guarantees}. We show in section \ref{section:hp_reg_bounds_known_algs} that many well-known existing bandit algorithms satisfy the assumption of our main theorem. We give experimental validation of our claims in section \ref{section:simulation_study}.

\section{Problem setting}\label{section:problem_setting}

\subsection{Master data and base algorithms internal data}

A master algorithm $\mathcal{M}$ has access to $J$ base contextual bandit algorithms $\mathcal{A}(1),\ldots,\mathcal{A}(J)$. At any time $t$, the master observes a context vector $X(t) \in \mathcal{X} \subset \mathbb{R}^d$, selects the index $\widehat{J}(t)$ of a base algorithm, and draws an action $A(t) \in [K] := \{1,\ldots,K\}$, following the policy of the selected base algorithm. The environment presents the reward $Y(t)$ corresponding to action $A(t)$.  We distinguish two types of rounds for the master algorithm: exploration rounds and exploitation rounds. We will cover in more detail further down the definition of each type of round. We let $D(t)$ be the indicator of the event that round $t$ is an exploration round. The data  collected at time $t$ by the master algorithm is $Z(t):=(D(t), \widehat{J}(t), X(t), A(t), Y(t))$. We denote $O(t):= (X(t), A(t), Y(t))$ the subvector of $Z(t)$ corresponding to the triple context, action, reward at time $t$. 
We denote $\mathcal{F}(t) := \sigma(Z(1),\ldots,Z(t))$, the filtration induced by the first $t$ observations. We suppose that contexts are independent and identically distributed (i.i.d.) and that the conditional distribution of rewards given actions and contexts is fixed across time points.

After each round $t$, the master passes the triple $(X(t), A(t), Y(t))$ to  base algorithm $\widehat{J}(t)$, which increments the internal time $n(\widehat{J}(t), t)$ of algorithm $\widehat{J}(t)$ by 1, and leaves unchanged the internal time of the other algorithms. For any $j \in [J]$, $n\geq 1$, we denote $\widetilde{O}(j,n) = (\widetilde{X}(j,n), \widetilde{A}(j,n), \widetilde{Y}(j,n))$ the triple collected by base algorithm $j$ at its internal time $n$. Making this more formal, we define the internal time of $j$ at global time $t$ as $n(j,t):=\sum_{\tau=1}^t \Ind(\widehat{J}(\tau)=j)$, that is the number of times $j$ has been selected by the master up till global time $t$. We define the reciprocal of $n(j,t)$ as $t(j,n):= \min \{ t \geq 1: n(j,t) = n\}$, that is the global time at which the internal time of $j$ was updated from $n-1$ to $n$. We can then formally define $\widetilde{O}(j,n)$ as $\widetilde{O}(j,n) :=  (\widetilde{X}(j,n), \widetilde{A}(j,n), \widetilde{Y}(j,n)) := (X(t(j,n)), A(t(j,n)), Y(t(j,n))$. We denote $\widetilde{\mathcal{F}}(j,n):= \sigma(\widetilde{O}(j,1),\ldots,\widetilde{O}(j,n))$ the filtration induced by the first $n$ observations of algorithm $\mathcal{A}(j)$. 

Let $n^{\xplr}(j,t) := \sum_{\tau=1}^t \Ind(\widehat{J}(\tau) = j, D(\tau) = 1)$ and $n^{\xplt}(j,t) := \sum_{\tau=1}^t \Ind(\widehat{J}(\tau) = j, D(\tau) = 0)$, the number of exploration and exploitation rounds $j$ was selected up till global time $t$. Note that $n(j,t) = n^{\xplr}(j,t) + n^{\xplt}(j,t)$. Define $\underbar{n}(t) := \min_{j \in [J]} n(j,t)$, $\underbar{n}^{\xplr}(t) := \min_{j \in [J]} n^{\xplr}(j,t)$, and $\underbar{n}^{\xplt}(t) := \min_{j \in [J]} n^{\xplt}(j,t)$.

\subsection{Policies and base algorithm regret}

A policy $\pi:[K] \times \mathcal{X} \rightarrow [0,1]$ is a conditional distribution over actions given a context, or otherwise stated, a mapping from contexts to a distribution over actions. So as to define the value and the risk of a policy, we introduce an triple of reference $(X^\Ref, A^\Ref, Y^\Ref)$ such that $X^\Ref$ has same distribution as any context $X(t)$, $Y^\Ref|A^\Ref,X^\Ref$ has same law as $Y(t)|A(t),X(t)$ for any $t$, and $A^\Ref|X^\Ref \sim \pi^\Ref(\cdot,X^\Ref)$, where $\pi^\Ref(a,x):=1/K$ for every $a$ and $x$. We introduce what we call the value loss $\ell$, defined for any policy $\pi$ and triple $o \in \mathcal{X} \times [K] \times \mathbb{R}$ as $\ell(\pi)(o):= -y \pi(a,w) / \pi^\Ref(a,w)$. We then define the risk of $\pi$ as $R(\pi):=E[\ell(\pi)(O^\Ref)]$. We will use that $-R(\pi) = E[Y^\Ref \pi(A^\Ref,X^\Ref)/\pi^\Ref(A^\Ref,X^\Ref)] = E[\sum_{a=1}^K \pi(a|X^\Ref) E[Y^\Ref|A^\Ref=a,X^\Ref]]$, where the latter quantity is the value of $\pi$, that is the expected reward per round one would get if one carried out $\pi$ under environment $\mathcal{E}$. We denote it $\mathcal{V}(\pi, \mathcal{E})$.

We denote $\pi(j,n)$ the policy proposed by $\mathcal{A}(j)$ at its internal time $n$. For any $x \in \mathcal{X}$, $\pi(j,n)(\cdot,x)$ is an $\widetilde{\mathcal{F}}(j,n-1)$-measurable distribution over $[K]$.
We suppose that each algorithm $\mathcal{A}(j)$ operates over a policy class $\Pi_j$. The regret of $\mathcal{A}(j)$ over its first $n$ rounds is defined as $\mathrm{Reg}(j,n):= \sum_{\tau=1}^n (\mathcal{V}^*_j(\mathcal{E}) - \widetilde{Y}(j,\tau))$, with $\mathcal{V}^*_j(\mathcal{E}) := \sup_{\pi \in \Pi_j} \mathcal{V}(\pi, \mathcal{E})$. We define the cumulative conditional regret as $\mathrm{CondReg}(j,n) := \sum_{\tau=1}^n (\mathcal{V}^*_j(\mathcal{E}) - E[\widetilde{Y}(j,\tau)|\widetilde{\mathcal{F}}_{\tau-1}]) = n( \widebar{R}(j,n) - R^*_j)$, with $R^*_j = - \mathcal{V}^*_j(\mathcal{E})$ and $\widebar{R}(j,n) = n^{-1}  \sum_{\tau=1}^n R(\pi(j, \tau))$, where the identity follows from the fact that $E[\widetilde{Y}(j,\tau)|\widetilde{\mathcal{F}}(j,\tau-1)] = \mathcal{V}(\pi(j, \tau), \mathcal{E}) =  - R(\pi(j,\tau))$. We define the pseudo regret as $\mathrm{pseudoReg}(j,n) := E[\mathrm{Reg}(j,n)]$.

\subsection{Master regret and rate adaptivity}\label{subsection:rate_adaptivity}

We let $\mathcal{V^*}(\mathcal{E}) := \max_{j \in [J]} \mathcal{V}^*_j(\mathcal{E})$, the optimal value across all policy classes $\Pi_1,\ldots,\Pi_J$, and similarly, we denote $R^* := \min_{j \in [J]} R^*_j$, the optimal risk across $\Pi_1,\ldots,\Pi_J$. We define the regret of the master as $\mathrm{Reg}(t):= \sum_{\tau=1}^t \mathcal{V}^*(\mathcal{E}) -  Y(t)$, and the conditional regret as $\mathrm{CondReg}(t) := \sum_{\tau=1}^t \mathcal{V}^*(\mathcal{E}) -  E[Y(\tau)|\mathcal{F}(\tau-1)]$.

The bandit literature gives upper bounds on either $\mathrm{Reg}(j,n)$ or $\mathrm{CondReg}(j,n)$ where the dependence in $n$ is of the form $\widetilde{\mathcal{O}}(n^{1-\beta_j})$, for some $\beta_j \in (0,1)$. (We denote $a_n = \widetilde{\mathcal{O}}(b_n)$ if $a_n = \mathcal{O}(b_n (\log n)^{\gamma})$ for some $\gamma > 0$.) While $\beta_j$ is known, it is not the case for $\mathcal{V}^*_j(\mathcal{E})$, the asymptotic value of (the policy proposed by) $\mathcal{A}(j)$.

As a necessary requirement, a successful meta-learner should achieve asymptotic value $\mathcal{V}^*(\mathcal{E})$. A second natural requirement is that it should have as good regret guarantees as the best algorithm in the subset $\mathcal{J}:=\{j \in [J] : \mathcal{V}^*_j(\mathcal{E}) = \mathcal{V}^*(\mathcal{E})\}$ of algorithms with optimal asymptotic value. We say that a master algorithm is \textit{rate-adaptive} if it achieves these two requirements.

\begin{definition}[Rate adaptivity]
Suppose that base algorithms have known regret (or conditional regret, or pseudo regret) upper bounds $\widetilde{\mathcal{O}}(n^{1-\beta_1}),\ldots,\widetilde{\mathcal{O}}(n^{1-\beta_J})$. Let $\beta(1) = \max_{j \in \mathcal{J}} \beta_j$, the rate exponent corresponding to the fastest upper bound rate among algorithms with optimal limit value $\mathcal{V}^*(\mathcal{E})$.

We say that the master is rate-adaptive in regret (or conditional regret, or pseudo regret), up to logarithmic factors, if it holds that $\mathrm{Reg}(t) = \widetilde{\mathcal{O}}(t^{1-\beta(1)})$ (or $\mathrm{CondReg}(t) = \widetilde{\mathcal{O}}(t^{1-\beta(1)})$, or $\mathrm{pseudoReg}(t) = \widetilde{\mathcal{O}}(t^{1-\beta(1)})$).
\end{definition}

\begin{remark}
A natural setting where several base algorithms converge to the same value $\mathcal{V}^*(\mathcal{E})$ is when several of the candidate policy classes contain the optimal measurable policy $\pi_0(\mathcal{E})$, that is when the realizability assumption is satisfied for several base policy classes.
\end{remark}

\begin{remark}
Suppose that rates $\widetilde{\mathcal{O}}(n^{1-\beta_1}),\ldots,\widetilde{\mathcal{O}}(n^{1-\beta_J})$ are minimax  optimal (up to logarithmic factors) for the policy classes $\Pi_1,\ldots,\Pi_J$, and that at least one  class contains $\pi_0(\mathcal{E})$. Then, in this context, rate adaptivity means that the master achieve the best minimax rate among classes that contain $\pi_0(\mathcal{E})$. In this context, \emph{rate-adaptivity} coincides with the notion of \emph{minimax adaptivity} from statistics' model selection literature (see e.g. \cite{massart2007, gine_nickl_2015}).
\end{remark}

\begin{remark}
OSOM \citep{chatterji2019osom} and ModCB \cite{foster2019} are minimax adaptive (and thus rate-adaptive) under the condition that $\pi_0$ belongs to at least one of the policy classes (that is under the realizability assumption). CORRAL and stochastic CORRAL are not rate-adaptive.
\end{remark}

\section{Algorithm description}\label{section:alg_description}

Our master algorithm $\mathcal{M}$ can be described as follows. At each global time $t \geq 1$, $\mathcal{M}$ selects a base algorithm index $\widehat{J}(t)$ based on past data, observes the context $X(t)$, draws an action $A(t)$ conditional on $X(t)$ following the policy $\pi(\widehat{J}(t), n(\widehat{J}(t), t-1))$ proposed by $\mathcal{A}(\widehat{J}(t))$ at its current internal time, carries out action $A(t)$ and collects reward $Y(t)$. At the end of round $t$, $\mathcal{M}$ passes the triple $(X(t), A(t), Y(t))$ to $\mathcal{A}(\widehat{J}(t)))$, which then increments its internal time and updates its policy proposal based on the new datapoint.

To fully characterize $\mathcal{M}$ it remains to describe the mechanism that produces $\widehat{J}(t)$. We distinguish exploration rounds and exploitation rounds. We determine if round $t$ is to be an exploration round by drawing, independently from the past $\mathcal{F}(t-1)$, the exploration round indicator $D(t)$ from a Bernoulli law with probability $p_t$, which we will define further down. During an exploration round (if $D(t)=1$), we draw $\hat{J}(t)$ independently of $\mathcal{F}(t-1)$, from a uniform distribution over $[J]$. During an exploitation round (if $D(t) =0$), we draw $\widehat{J}(t)$ based on a criterion depending on the past rewards of base algorithms. Let us define this criterion.

Let $\widehat{R}(j,n) := - n^{-1} \sum_{\tau=1}^n Y(j,\tau)$, the mean of negative rewards collected by algorithm $j$ up till its internal time $n$. For any $n \geq 1$, define the algorithm selector $\widehat{j}(n, \widehat{R}(1,n),\ldots\widehat{R}(J,n), c_1) := \argmin \{\widehat{R}(j,n) + c_1 n^{-\beta_j} : j \in [J]\}$, with $c_1 > 0$ a tuning parameter. When there is no ambiguity, we will use the shorthand notation $\widehat{j}(n)$. The selector $\widehat{j}(n)$ compares every base algorithm at the same internal time, and picks the one that minimizes the sum of the estimated risk at internal time $n$ plus the theoretical regret upper bound rate $n^{-\beta_j}$. If $D(t)=0$, we let $\hat{J}(t):=\widehat{j}(\underline{n}^{\xplr}(t))$, that is we compare the base algorithms at a common internal time equal to the highest common number of exploration rounds each base has been called until $t$.

If any base algorithm $j$ has average risk converging to some $R^*_j > R^*$, the regret of an exploration step is $O(1)$ in expectation. If we want the regret of the master with respect to (w.r.t.) $R^*$ to be $\mathcal{O}(t^{-\beta(1)})$, we need the exploration probability $p_t$ to be $\mathcal{O}(t^{-\beta(1)})$. Because $\beta(1)$ is unknown (it depends
on $\mathcal{J}$ hence on $\mathcal{E}$ too), we make a conservative choice and we set $p_t := c_2 t^{-\overline{\beta}}$, with $\overline{\beta} := \max_{j \in [J]} \beta_j$ (a quantity available to us), where $c_2 > 0$ is a tuning parameter.

We give the pseudo code of the master algorithm $\mathcal{M}$ as algorithm \ref{alg:master} below.
\begin{algorithm}
   \caption{Ensembling Bandits by Fair Comparison (EnsBFC)}\label{alg:master}
   \renewcommand{\algorithmicrequire}{\textbf{Input:}}
\renewcommand{\algorithmicensure}{\textbf{Output:}}
\begin{algorithmic}
   \Require base algorithms $\mathcal{A}(1),\ldots,\mathcal{A}(J)$, theoretical regret per round exponents $\beta_1,\ldots,\beta_j$, tuning parameters $c_1, c_2$.
	\State Initialize risk estimators: $\widehat{R}(j,0) \gets 0$ for every $j \in [J]$.
   \For{$t \geq 1$}
   \State Draw exploration round indicator $D(t) \sim \mathrm{Bernoulli}(p_t)$.
   \If{$D(t)=1$} 
   \State Draw $\widehat{J}(t) \sim \mathrm{Unif}([J])$.
   \Else
   \State Set $\widehat{J}(t) \gets \widehat{j}(\underline{n}^{\xplr}(t), \widehat{R}(1,\underline{n}^{\xplr}(t)),\ldots\widehat{R}(J,\underline{n}^{\xplr}(t)), c_1)$.
   \EndIf
   \State Observe context $X(t)$.
   \State Sample action $A(t)$ following the policy proposed by $\mathcal{A}(\widehat{J}(t))$ at its current internal time:
   \begin{equation}
   A(t)|X(t) \sim \pi(\widehat{J}(t), n(\widehat{J}(t), t-1))(\cdot, X(t)).
   \end{equation}
   \State Collect reward $Y(t)$.
   \State Pass the triple $(X(t), A(t), Y(t))$ to $\mathcal{A}(\widehat{J}(t))$, which then updates its policy proposal and increments its internal time by 1.
   \EndFor.
\end{algorithmic}
\end{algorithm}

\section{Regret guarantees of the master algorithm}\label{section:regret_guarantees}

Our main result shows that the expected regret of the master satisfies the same theoretical upper bound with respect to $R^*$ as the best base algorithm. The main assumption is that  each base algorithm satisfies its conditional regret bound $\mathcal{O}(n^{1-\beta_j})$ with high probability. We state this requirement formally as an exponential deviation bound.
\begin{assumption}[Concentration]\label{assumption:concentration}
There exists $C_0 \geq 0$, $C_1, C_2 > 0$, $\beta_1,\ldots\beta_J \leq 1/2$, $\nu_1,\ldots, \nu_J  > 0$ such that, for any $n\geq 1$,  $j\in [J]$ and $x \in [0,1]$,
\begin{align}
P \left[ \widebar{R}(j,n) - R^*_j \geq C_0 n^{-\beta_j} + x \right] \leq C_1 \exp\left(-C_2 \times (n x^{1/\beta_j})^{\nu_j}\right), \label{eq:exp_deviation_bound}
\end{align}
and $\widebar{R}(j,n) - R^*_j \geq 0$.
\end{assumption}
We also require that the rewards be conditionally sub-Gaussian given the past. Without loss of generality, we require that they be conditionally 1-sub-Gaussian.
\begin{assumption}\label{assumption:conditional_sub_gaussian_rewards}
For all $\lambda \in \mathbb{R}$, and every $t \geq 1$, $E[\exp(\lambda(Y_t - E[Y_t|\mathcal{F}_{t-1}])|\mathcal{F}_{t-1}] \leq \exp(\lambda^2 / 2)$.
\end{assumption}
We show in the next section that the high probability regret bounds available in the literature for many well-known CB algorithms can be reformulated as an exponential deviation bound of the form \eqref{eq:exp_deviation_bound}. We can now state our main result.
\begin{theorem}[Expected regret for the master]\label{thm:regret_master}
Suppose that assumptions \ref{assumption:concentration} and \ref{assumption:conditional_sub_gaussian_rewards} hold, and recall the definition of $\beta(1)$ from subsection \ref{subsection:rate_adaptivity}.  Then, \emph{EnsBFC} is rate-adaptive in pseudo-regret, that is,
$$E\left[\sum_{t=1}^T \left(\mathcal{V}^*(\mathcal{E}) - Y(t)\right)\right] \leq C T^{1-\beta(1)},$$ for some $C > 0$ depending only on the constants of the problem. If, in addition, the regret upper bounds satisfied by the base algorithms are minimax for their respective policy classes, then EnsBFC is minimax adaptive in pseudo regret.
\end{theorem}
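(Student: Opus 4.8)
The plan is to decompose the master's cumulative pseudo-regret into contributions from exploration rounds and exploitation rounds, and to control each separately. Write $\mathrm{pseudoReg}(T) = E\big[\sum_{t=1}^T (\mathcal{V}^*(\mathcal{E}) - Y(t))\big]$ and split according to the indicator $D(t)$. Since $p_t = c_2 t^{-\overline{\beta}}$ and each reward is bounded in expectation regret per round by a constant (more precisely, $\mathcal{V}^*(\mathcal{E}) - E[Y(t)\mid \mathcal{F}(t-1)] \le$ some $O(1)$ quantity under Assumption~\ref{assumption:conditional_sub_gaussian_rewards} and the boundedness baked into Assumption~\ref{assumption:concentration}), the exploration contribution is at most $\sum_{t=1}^T p_t \cdot O(1) = O(T^{1-\overline{\beta}}) = \widetilde{\mathcal{O}}(T^{1-\beta(1)})$ since $\overline{\beta} \ge \beta(1)$. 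So exploration rounds are cheap by design; the real work is in the exploitation rounds.

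For exploitation rounds, the first step is to relate the master's per-round regret at global time $t$ to the quantity $\widebar R(\widehat J(t), \cdot) - R^*$, i.e. the average excess risk of the base algorithm being followed. Here the key structural fact is that on an exploitation round the master follows $\widehat{j}(\underline n^{\xplr}(t))$, the minimizer of $\widehat R(j,n) + c_1 n^{-\beta_j}$ at the common internal exploration time $n = \underline n^{\xplr}(t)$. I would then argue, via the concentration Assumption~\ref{assumption:concentration} applied to each base algorithm at internal time $n$ together with an Azuma/Bernstein-type bound (justified by Assumption~\ref{assumption:conditional_sub_gaussian_rewards}) controlling $|\widehat R(j,n) - \widebar R(j,n)|$, that with high probability $\widehat{j}(n)$ is an algorithm in $\mathcal{J}$ whose average excess risk is $\widetilde{\mathcal{O}}(n^{-\beta(1)})$. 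Concretely: for $j^\star \in \mathcal{J}$ achieving exponent $\beta(1)$, one has $\widehat R(j^\star, n) + c_1 n^{-\beta_{j^\star}} \le R^* + \widetilde{\mathcal{O}}(n^{-\beta(1)})$ with high probability; hence the selected $\widehat j(n)$ satisfies $\widehat R(\widehat j(n), n) + c_1 n^{-\beta_{\widehat j(n)}} \le R^* + \widetilde{\mathcal{O}}(n^{-\beta(1)})$, and inverting the concentration bound for $\widehat j(n)$ (using that $\widehat R$ concentrates around $\widebar R \ge R^*_{\widehat j(n)} \ge R^*$) forces $R^*_{\widehat j(n)} = R^*$ and $\widebar R(\widehat j(n), n) - R^* = \widetilde{\mathcal{O}}(n^{-\beta(1)})$ on the good event. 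Summing the per-round exploitation regret then requires converting the bound indexed by the internal exploration time $n = \underline n^{\xplr}(t)$ back into global time $t$.

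The bridge from internal to global time is where I expect the main obstacle. One needs a high-probability lower bound of the form $\underline n^{\xplr}(t) \gtrsim t \cdot p_t / J = \widetilde\Omega(t^{1-\overline\beta})$, obtained by a Chernoff bound on the sum of independent Bernoulli$(p_\tau/J)$ variables (exploration-and-selected-$j$ events are independent of the past by construction), uniformly over $j$. Plugging this in, the exploitation regret at time $t$ is $\widetilde{\mathcal{O}}\big((\underline n^{\xplr}(t))^{-\beta(1)}\big) = \widetilde{\mathcal{O}}(t^{-\beta(1)(1-\overline\beta)})$, which when summed over $t \le T$ gives $\widetilde{\mathcal{O}}(T^{1-\beta(1)(1-\overline\beta)})$ --- this is \emph{not} quite $\widetilde{\mathcal{O}}(T^{1-\beta(1)})$, so a naive argument loses a factor. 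The fix, and the delicate part of the proof, is to account for the regret already \emph{absorbed} into the base algorithm's internal exploitation rounds: the master's exploitation regret is really $\sum_j \mathrm{CondReg}(j, n^{\xplt}(j,T))$ evaluated along the base algorithm's own trajectory, so one should directly bound $\sum_{t : D(t)=0}(\mathcal{V}^*(\mathcal{E}) - E[Y(t)\mid\mathcal{F}(t-1)])$ by $\sum_{j \in \mathcal{J}} \widetilde{\mathcal{O}}(n(j,T)^{1-\beta(1)}) + (\text{cost of ever selecting } j \notin \mathcal{J})$, using Assumption~\ref{assumption:concentration} for the first term and a union bound over the good events to show the second term is $O(1)$ in expectation (each suboptimal selection happens only finitely often in expectation because the exploration counts grow polynomially and the concentration bound decays stretched-exponentially in them). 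Finally one combines the $O(1)$ bad-event contribution (bounding regret by the trivial $O(1)$ per round on the low-probability complement, using that $C_1\exp(-C_2(\cdot))$ is summable against the polynomially growing exploration counts), the $\widetilde{\mathcal{O}}(T^{1-\overline\beta})$ exploration cost, and the $\widetilde{\mathcal{O}}(T^{1-\beta(1)})$ good-event exploitation cost, absorbing all logarithmic factors and the remaining $T^{1-\overline\beta} \le T^{1-\beta(1)}$ term into the constant $C$. The minimax-adaptivity conclusion is then immediate from the definition, since under the stated hypothesis $1-\beta(1)$ is the best minimax exponent among classes realizing $\mathcal{V}^*(\mathcal{E})$.
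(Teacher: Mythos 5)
Your overall architecture (exploration/exploitation split, correctness of the selector via Assumption \ref{assumption:concentration} plus Azuma--Hoeffding, a counting argument for how often suboptimal candidates get exploited, and the correct observation that one must not naively convert the comparison time $\underline{n}^{\xplr}(t)$ into global time) matches the paper's proof in spirit, and your treatment of exploration rounds and of value-suboptimal candidates ($j\notin\mathcal{J}$) is essentially Case 1 of the paper's Lemma \ref{lemma:proba_suboptimal_selector} combined with its counting argument. The genuine gap is in how you handle candidates that are value-optimal but rate-suboptimal, i.e.\ $j\in\mathcal{J}$ with $\beta_j<\beta(1)$ (the set $\mathcal{J}\setminus\mathcal{J}(1)$). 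You bound their exploitation contribution by $\sum_{j\in\mathcal{J}}\widetilde{\mathcal{O}}(n(j,T)^{1-\beta(1)})$ ``using Assumption \ref{assumption:concentration}'', but that assumption only gives such a $j$ the exponent $\beta_j$, not $\beta(1)$; and nothing in your argument prevents such a $j$ from receiving $\Theta(T)$ rounds, since it converges to the same value $R^*$ and its empirical risk can be statistically indistinguishable from that of $j^\star$. Its guaranteed contribution would then only be $\widetilde{\mathcal{O}}(T^{1-\beta_j})\gg T^{1-\beta(1)}$. Your earlier observation---that on the good event the selected algorithm satisfies $\widebar{R}(\widehat{j}(n),n)-R^*=\widetilde{\mathcal{O}}(n^{-\beta(1)})$ at the comparison time $n=\underline{n}^{\xplr}(t)$---does not rescue this, because the policy actually played at global time $t$ sits at the much larger internal time $n(\widehat{J}(t),t-1)$, and a small \emph{average} excess risk at time $n$ controls nothing (beyond the $\beta_j$-rate) about the regret accrued at those later internal times; comparisons are made at internal times growing only like $t^{1-\overline{\beta}}$, so a later bad stretch of such a $j$ is not detected in time. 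This is precisely the role of the penalty $c_1 n^{-\beta_j}$ and of Case 2 of Lemma \ref{lemma:proba_suboptimal_selector}: for $j\in\mathcal{J}\setminus\mathcal{J}(1)$ to win the comparison, the empirical fluctuations must exceed the deterministic gap $c_1(n^{-\beta_j}-n^{-\beta(1)})\gtrsim n^{-\beta_j}$, which under Assumption \ref{assumption:concentration} and Azuma--Hoeffding has probability at most $C_{3,j}\exp(-C_{4,j}n^{\kappa_j})$ with $\kappa_j=\min(1-2\beta_j,(1-\beta_j/\beta(1))\nu_j)>0$; hence these algorithms, like the value-suboptimal ones, are exploited only $O(1)$ times in expectation and contribute only through the $O(t^{1-\overline{\beta}})$ exploration rounds. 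Your ``finitely often'' argument is applied only to $j\notin\mathcal{J}$, so this step is missing.

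Two smaller points. First, when counting expected exploitation selections you must handle events $\{\widehat{j}(\underline{n}^{\xplr}(\tau))=j\}$ in which the comparison time is random and a priori correlated with the selector's empirical risks; the paper devotes Lemma \ref{lemma:indep} to the independence of $\underline{n}^{\xplr}(\tau)$ from each algorithm's internal data (then applies Jensen), whereas you invoke independence only for the Chernoff lower bound on $\underline{n}^{\xplr}(t)$. Second, for the $j\in\mathcal{J}(1)$ term the paper's route is cleaner than a high-probability lower bound on $n(j,T)$: since instantaneous excess risks are nonnegative, $n(j,t)\,(\widebar{R}(j,n(j,t))-R^*)\le t\,(\widebar{R}(j,t)-R^*)$, so the contribution of each optimal-rate algorithm is bounded by $E[\widebar{R}(j,t)]-R^*\le C t^{-\beta(1)}$ with no internal-to-global time conversion at all.
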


\begin{remark}
Assumption \ref{assumption:concentration} is met for many well-known algorithms, as we show in the following section.
\end{remark}

\begin{remark}
The $c_1 n^{-\beta_j}$ term in the criterion $\widehat{R}(j,n) + c_1 n^{-\beta_j}$ that $\widehat{j}(n)$ minimizes across $[J]$ ensures that $\widehat{R}(j,n) + c_1 n^{-\beta_j} - R^*$ is, in expectation, lower bounded by $c_1 n^{-\beta_j}$. It may be the case that, among the base algorithms that have optimal limit value $\mathcal{V}^*(\mathcal{E})$ (that is those in $\mathcal{J}$), the one that performs best in a given environment is not the one that has best regret rate upper bound $\widetilde{\mathcal{O}}(n^{1-\beta(1)})$. Enforcing this lower bound on the criterion ensures that the master picks an algorithm with optimal regret upper bound $\widetilde{\mathcal{O}}(n^{1-\beta(1)})$. We further discuss the need for such a lower bound in appendix \ref{section:lower_bound}.
\end{remark}

\begin{remark}
The rate of pseudo-regret of EnsBFC is not impacted by the specific values of the tuning parameters $c_1$ and $c_2$ (as long as they are set to constants independent of $T$), but the finite performance is. We found in our simulations that setting $c_1 = 0.5$ and $c_2 = 10$ works fine. We leave to future work the task of designing a data-driven rule of thumb to select $c_1$ and $c_2$.
\end{remark}

In the next subsection, we take a step back to put our results in perspective with the broader model selection literature.

\subsection{Comments on the nature of the result: minimax adaptivity vs. oracle equivalence}

Results in the model selection literature are essentially of two types: minimax adaptivity guarantees and oracle inequalities.

Given a collection of statistical models, a model selection procedure is said to be minimax adaptive if it achieves the minimax risk of any model that contains the ``truth''. In our setting, the statistical models are policy classes and the ``truth'' is the optimal measurable policy $\pi_0(\mathcal{E})$. A notable example of minimax adaptive model selection procedure is Lepski's method \citep{Lepski90, Lepski91}.

Consider a collection of estimators $\widehat{\theta}_1,\ldots,\widehat{\theta}_J$, and a data-generating distribution $P$, and denote $\mathcal{R}(\widehat{\theta}, P)$ the risk of any estimator $\widehat{\theta}$ under $P$. In our context, one should think of the estimators as the policies computed by the base algorithms, and of specifying $P$ as specifying $\mathcal{E}$. We say that an estimator $\widehat{\theta}$ satisfies an oracle inequality w.r.t. $\widehat{\theta}_1,\ldots,\widehat{\theta}_J$ if $\mathcal{R}(\widehat{\theta}, P) \leq (1+\epsilon) \min_{j \in [J]} \mathcal{R}(\widehat{\theta}_j, P) + \mathrm{Err}$, with $\epsilon > 0$ and $\mathrm{Err}$ an error term. Moreover, we say that the estimator $\widehat{\theta}$ is oracle equivalent if $\mathcal{R}(\widehat{\theta}, P) / \min_{j \in [J]} \mathcal{R}(\widehat{\theta}_j, P) \rightarrow 1$. Being oracle equivalent means performing as well as the best instance-dependent (that is $P$-dependent) estimator. Multi-fold cross validation yields an oracle-equivalent estimator \citep{devroye-lugosi2001, gyorfi2002, dudoit_vdL2005}.

Our guarantees are closer to the notion of minimax adaptivity than to that of oracle equivalence, and, as we pointed out earlier, coincide with it if the base algorithms are minimax w.r.t. their policy classes. Minimax adaptivity is the property satisfied by the OSOM \citep{chatterji2019osom} and ModCB \cite{foster2019}. Minimax adaptivity is a worst-case (over each base model) statement, which represents a step in the right direction. We nevertheless argue that what practioners are looking for in a model selection procedure is to get the same performance as the base learner that performs best under the environment at hand, that is oracle equivalence, like the guarantee offered by multi-fold cross-validation.

\section{High probability regret bound for some existing CB algorithms}\label{section:hp_reg_bounds_known_algs}

In this section, we recast regret guarantees for well-known CB algorithms under the form the exponential bound \eqref{eq:exp_deviation_bound} from our concentration assumption (assumption \ref{assumption:concentration}).

Recall the definitions of $\mathrm{Reg}$, $\mathrm{CondReg}$ and $\mathrm{pseudoReg}$ from section \ref{section:problem_setting}.
Observe that our concentration assumption is a high probability bound on $\widebar{R}(n) -R^* = \mathrm{CondReg}(n) / n$, the average of the conditional instantaneous regret. Although some articles provide high probability bounds directly on $\mathrm{CondReg}(n)$ (e.g. \cite{abbasi-yadkori2011}), most works give high probability bounds on $\mathrm{Reg}(n)$. Fortunately, under the assumption that rewards are conditionally sub-Gaussian (assumption \ref{assumption:conditional_sub_gaussian_rewards}), we can recover a high probability regret bound on $\mathrm{CondReg}(n)$ from a high probability regret bound on $\mathrm{Reg}(n)$ using the Azuma-Hoeffding inequality.

(In the following paragraphs, we suppose, to keep notation consistent, that $j$ is a base learner of the type considered in the paragraph).
\paragraph{UCB.} \cite[Lemma 4.9 in][]{pacchiano2020model}, itself a corollary of \cite[theorem 7 in][]{abbasi-yadkori2011} states that if the rewards are conditionally $1$-sub-Gaussian, the regret of UCB over $n$ rounds is $\mathcal{O}(\sqrt{n \log(n / \delta)})$.

\begin{corollary}[Exponential deviation bound for UCB]\label{corollary:exp_dev_UCB}
Suppose that assumption \ref{assumption:conditional_sub_gaussian_rewards} holds. Then, there exist $C_0, C_1, C_2 > 0$ such that, for all $x \geq 0$,
$P \left[ \widebar{R}(j,n) - R^*_j \geq C_0 n^{-1/2} (\log n)^{1/2} + x \right] \leq C_1 \exp(- C_2 n x^2).$
\end{corollary}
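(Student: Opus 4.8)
The plan is to start from the high-probability regret bound on $\mathrm{Reg}(j,n)$ quoted just above (UCB regret is $\mathcal{O}(\sqrt{n\log(n/\delta)})$ with probability at least $1-\delta$), convert it into a high-probability bound on the \emph{conditional} regret $\mathrm{CondReg}(j,n)$ by means of the Azuma--Hoeffding inequality, and then rewrite the resulting tail bound in the form \eqref{eq:exp_deviation_bound} with $\beta_j = 1/2$ and $\nu_j = 2$. Recall that $\widebar{R}(j,n) - R^*_j = \mathrm{CondReg}(j,n)/n$, so a tail bound on $\mathrm{CondReg}(j,n)$ is exactly what is wanted. The key identity linking the two regrets is
$$
\mathrm{CondReg}(j,n) - \mathrm{Reg}(j,n) = \sum_{\tau=1}^n \big(\widetilde{Y}(j,\tau) - E[\widetilde{Y}(j,\tau)\mid \widetilde{\mathcal{F}}(j,\tau-1)]\big),
$$
whose right-hand side is a sum of martingale differences that, under Assumption \ref{assumption:conditional_sub_gaussian_rewards}, are conditionally $1$-sub-Gaussian.

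First I would fix $n$ and a confidence level $\delta \in (0,1)$. By the quoted UCB bound, there is a constant $c > 0$ such that, on an event $\mathcal{E}_1$ of probability at least $1-\delta$, $\mathrm{Reg}(j,n) \leq c\sqrt{n\log(n/\delta)}$. Next, applying Azuma--Hoeffding (in its sub-Gaussian form) to the martingale-difference sum above gives that, on an event $\mathcal{E}_2$ of probability at least $1-\delta$,
$$
\mathrm{CondReg}(j,n) \leq \mathrm{Reg}(j,n) + \sqrt{2n\log(1/\delta)}.
$$
On $\mathcal{E}_1 \cap \mathcal{E}_2$, which has probability at least $1 - 2\delta$, we therefore get $\mathrm{CondReg}(j,n) \leq c'\sqrt{n\log(n/\delta)}$ for a suitable constant $c'$, i.e.
$$
\widebar{R}(j,n) - R^*_j \leq c' \sqrt{\frac{\log(n/\delta)}{n}} \leq c'\sqrt{\frac{\log n}{n}} + c'\sqrt{\frac{\log(1/\delta)}{n}},
$$
using $\log(n/\delta) = \log n + \log(1/\delta)$ and subadditivity of the square root.

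It remains to turn this ``for every $\delta$, with probability $1-2\delta$'' statement into the quantile form of \eqref{eq:exp_deviation_bound}. Setting $x := c'\sqrt{\log(1/\delta)/n}$ — equivalently $\delta = \exp(-n x^2 / c'^2)$ — the displayed inequality reads: with probability at least $1 - 2\exp(-nx^2/c'^2)$, one has $\widebar{R}(j,n) - R^*_j \leq C_0 n^{-1/2}(\log n)^{1/2} + x$, with $C_0 = c'$. Taking complements yields
$$
P\!\left[\widebar{R}(j,n) - R^*_j \geq C_0 n^{-1/2}(\log n)^{1/2} + x\right] \leq C_1 \exp(-C_2 n x^2)
$$
with $C_1 = 2$ and $C_2 = 1/c'^2$, which is precisely the claimed bound (and matches \eqref{eq:exp_deviation_bound} with $\beta_j = 1/2$, $\nu_j = 2$). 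The only mild subtlety — the ``hard part'', such as it is — is bookkeeping with the constants so that the $\log n$ term is cleanly separated from the $\log(1/\delta)$ term and the union bound over $\mathcal{E}_1, \mathcal{E}_2$ is absorbed into $C_1$; the range restriction $x \in [0,1]$ in Assumption \ref{assumption:concentration} plays no role here since the statement of the corollary asks only for $x \geq 0$, and for large $x$ the bound is trivially true once $C_1 \geq 1$. One should also note that Assumption \ref{assumption:conditional_sub_gaussian_rewards} must be invoked for the internal data stream $\widetilde{O}(j,\cdot)$ with respect to the filtration $\widetilde{\mathcal{F}}(j,\cdot)$; this is legitimate because $\widetilde{\mathcal{F}}(j,n)$ is a sub-$\sigma$-algebra structure obtained by stopping-time sampling of the master filtration, along which conditional sub-Gaussianity is preserved.
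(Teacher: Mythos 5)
Your proposal is correct and follows essentially the same route as the paper's proof: convert the high-probability bound on $\mathrm{Reg}(j,n)$ into one on $\mathrm{CondReg}(j,n)$ via Azuma--Hoeffding under Assumption \ref{assumption:conditional_sub_gaussian_rewards}, split $\log(n/\delta)$ into $\log n$ and $\log(1/\delta)$ terms, and substitute $x \propto \sqrt{\log(1/\delta)/n}$ to obtain the tail bound with $\beta_j = 1/2$, $\nu_j = 2$. The only cosmetic difference is that you handle the two failure events by a union bound absorbed into $C_1 = 2$, while the paper splits the confidence level as $\delta/2$; your added remark about sub-Gaussianity transferring to the internal filtration is a point the paper leaves implicit.
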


\paragraph{$\varepsilon$-greedy.} \cite{bibaut2020} consider the $\varepsilon$-greedy algorithm over a nonparametric policy class. The following result is a direct consequence of an intermediate claim in the proof \cite[thereom 4 in][]{bibaut2020}.

\begin{lemma}[Exponential deviation bound for $\varepsilon$-greedy]\label{lemma:exp_dev_eps_greedy} Consider the $\varepsilon$-greedy algorithm over a nonparametric policy class $\Pi$. Suppose that the metric entropy in $\| \cdot\|_\infty$ norm of $\Pi$ satisfies $\log N(\rho, \Pi, \|\cdot\|_\infty) =\mathcal{O}(\rho^{-p})$ for some $p > 0$, and that the exploration rate at $t$ is $\epsilon_t \propto t^{-(\frac{1}{3} \vee \frac{p}{p+1})}$. Then, there exist $C_0, C_1, C_2 > 0$ such that, for all $x \geq 0$, 
$P \left[ \widebar{R}(j,n) - R^*_j \geq C_0 t^{-\beta} + x \right] \leq C_1 \exp(-C_2 \times (n x^{1/\beta})^{2 \beta} ),$
with $\beta = \frac{1}{3} \vee \frac{p}{p+1}$.
\end{lemma}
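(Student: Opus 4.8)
The plan is to reduce the statement to the high-probability excess-risk bound established inside the proof of Theorem~4 of \cite{bibaut2020}, and then perform a change of variables that recasts it in the exponential-tail form \eqref{eq:exp_deviation_bound}. First I would recall the structure of the $\varepsilon$-greedy algorithm: at its internal time $\tau$ it plays the mixture policy $\pi(j,\tau) = \epsilon_\tau \pi^\Ref + (1-\epsilon_\tau)\widehat\pi_\tau$, where $\widehat\pi_\tau \in \Pi$ minimizes the inverse-propensity-weighted empirical risk built from the internal exploration samples collected up to $\tau$. Since the value loss $\ell(\pi)(o) = -y\,\pi(a,w)/\pi^\Ref(a,w)$ is linear in $\pi$, the risk $R$ is linear in $\pi$, so
$$\widebar R(j,n) - R^*_j \;=\; \frac1n\sum_{\tau=1}^n\epsilon_\tau\bigl(R(\pi^\Ref) - R^*_j\bigr) \;+\; \frac1n\sum_{\tau=1}^n(1-\epsilon_\tau)\bigl(R(\widehat\pi_\tau) - R^*_j\bigr),$$
which splits the average excess risk into (i) a deterministic exploration term of order $n^{-1}\sum_{\tau\le n}\epsilon_\tau \asymp n^{-\beta}$ (using $\epsilon_\tau\propto\tau^{-\beta}$ and $\beta<1$), and (ii) the average of the excess risks of the empirical risk minimizers $\widehat\pi_\tau$, which is the part governed by the empirical-process machinery of \cite{bibaut2020}.

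Next I would quote the relevant intermediate claim from that proof. Under the metric-entropy condition $\log N(\rho,\Pi,\|\cdot\|_\infty) = \mathcal{O}(\rho^{-p})$, the localized empirical-process bound there, combined with a Chernoff step showing that the random number of internal exploration rounds concentrates around its mean $\sum_{\tau\le n}\epsilon_\tau\asymp n^{1-\beta}$, yields that the excess risk of $\widehat\pi_\tau$ is, with high probability, of the same polynomial order as the exploration term — so that $\epsilon_\tau\propto\tau^{-\beta}$ with $\beta = \tfrac13\vee\tfrac{p}{p+1}$ is precisely the choice that balances the (inverse-propensity-inflated) nonparametric estimation error against the exploration cost; the two regimes $p\le 1/2$ (where $\beta=1/3$, the $1/3$ floor being the intrinsic best rate of $\varepsilon$-greedy exploration) and $p\ge 1/2$ (where $\beta=p/(p+1)$ is driven by the estimation rate) would be kept explicit when invoking it. Concretely, the claim gives that for every fixed $\delta\in(0,1)$, with probability at least $1-\delta$,
$$\widebar R(j,n) - R^*_j \;\le\; C_0\, n^{-\beta} \;+\; C\, n^{-\beta}\sqrt{\log(1/\delta)},$$
the first term being the deterministic rate and the second a sub-Gaussian fluctuation of scale $n^{-\beta}$ (the $\log n$ produced by the union bound over $\tau\le n$ being absorbable or handled by choosing $\delta$ appropriately).

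Finally I would set $x := C\, n^{-\beta}\sqrt{\log(1/\delta)}$ and solve for $\delta$, obtaining $\delta = \exp\bigl(-(x n^\beta/C)^2\bigr) = \exp(-C_2\, n^{2\beta} x^2)$; using the identity $n^{2\beta}x^2 = (n\,x^{1/\beta})^{2\beta}$ and substituting back gives exactly
$$P\!\left[\widebar R(j,n) - R^*_j \ge C_0 n^{-\beta} + x\right] \;\le\; C_1\exp\!\left(-C_2\,(n\,x^{1/\beta})^{2\beta}\right)$$
for all $x\ge 0$, with $C_1$ the constant left over from the union bound, matching Assumption~\ref{assumption:concentration} with $\nu_j = 2\beta$; nonnegativity $\widebar R(j,n)-R^*_j\ge 0$ is immediate from $R^*_j=\inf_{\pi\in\Pi}R(\pi)$. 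The main obstacle is bookkeeping rather than conceptual: one must check that the fluctuation term extracted from \cite{bibaut2020} genuinely scales as $n^{-\beta}\sqrt{\log(1/\delta)}$ with no stray extra power of $n$ that cannot be absorbed into the $\widetilde{\mathcal O}$, which is exactly why the concentration of the number of exploration rounds and the correct choice of localization radius in the empirical-process step both need to be tracked carefully.
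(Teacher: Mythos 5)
Your proposal is correct and follows essentially the same route as the paper: both reduce the claim to the intermediate high-probability excess-risk bound established inside the proof of the $\varepsilon$-greedy theorem of \cite{bibaut2020} (the step just before Azuma-Hoeffding relates it to regret), and then recast the $(1-\delta)$ bound in the exponential-tail form of Assumption~\ref{assumption:concentration} via the change of variables $x \propto n^{-\beta}\sqrt{\log(1/\delta)}$, giving $\nu_j = 2\beta$. The only detail the paper makes explicit that you omit is that \cite{bibaut2020} is stated under a bracketing-entropy condition, so one must note that bracketing entropy in any $L_p$ norm is dominated by covering entropy in $\|\cdot\|_\infty$ to apply it under the lemma's hypothesis; this is minor bookkeeping.
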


\paragraph{LinUCB.} \cite[Theorem 3 in][]{abbasi-yadkori2011} states that LinUCB satisfies $\mathrm{CondReg}(n) = \mathcal{O}(\sqrt{n} \log(1/ \delta))$ with probability at least $1 -\delta$. We recast their bound as follows.
\begin{corollary}\label{corollary:exp_dev_linucb}
Under the conditions of \cite[theorem 3 in][]{abbasi-yadkori2011}, there exists $C_2 > 0$ such that, for all $x > 0$,
$P\left[ \widebar{R}(j,n) - R^*_j \geq x \right] \leq \exp( - C_2 (n x^2)^{1/2})$
\end{corollary}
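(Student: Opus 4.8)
The plan is to obtain the stated exponential deviation bound by a single reparametrization of the high-probability regret bound for LinUCB; no real machinery is needed beyond inverting one relation.

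First I would record the input in a convenient form. The cited result, \cite[Theorem 3 in][]{abbasi-yadkori2011} as restated just above the corollary, provides a constant $c > 0$ — depending only on the fixed problem parameters (the dimension, the $\ell_2$-bounds on the unknown parameter and on the feature vectors, and the sub-Gaussian noise level), not on $n$ or $\delta$ — such that for every $n \ge 1$ and every $\delta \in (0,1)$, with probability at least $1-\delta$ one has $\mathrm{CondReg}(j,n) \le c\sqrt{n}\,\log(1/\delta)$. Since $\mathrm{CondReg}(j,n) = n\big(\widebar{R}(j,n) - R^*_j\big)$ by definition, this is the statement $P\big[\widebar{R}(j,n) - R^*_j \ge c\,n^{-1/2}\log(1/\delta)\big] \le \delta$, valid for all $n\ge 1$ and all $\delta\in(0,1)$.

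Then I would invert the relation $x = c\,n^{-1/2}\log(1/\delta)$: fixing $n\ge 1$ and $x>0$, set $\delta := \exp\!\big(-x\sqrt{n}/c\big)\in(0,1)$, so that the threshold $c\,n^{-1/2}\log(1/\delta)$ is exactly $x$. Plugging this $\delta$ into the displayed bound gives $P\big[\widebar{R}(j,n) - R^*_j \ge x\big] \le \exp\!\big(-x\sqrt{n}/c\big) = \exp\!\big(-(1/c)(nx^2)^{1/2}\big)$, which is the claim with $C_2 := 1/c$; in the notation of \eqref{eq:exp_deviation_bound} this is the case $\beta_j = 1/2$, $\nu_j = 1/2$, $C_0 = 0$.

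There is essentially no obstacle, only minor bookkeeping. The two points to watch are: (i) the cited bound must hold uniformly over $\delta\in(0,1)$ so that the chosen $\delta$ is admissible — for very small $x$ the corresponding $\delta$ is close to $1$ and the resulting bound is weak but always valid, since $\exp(-(1/c)(nx^2)^{1/2})\le 1$; and (ii) the literal statement of \cite[Theorem 3 in][]{abbasi-yadkori2011} carries a $\log(\cdot/\delta)$ or dimension-dependent confidence radius rather than a bare $\log(1/\delta)$, so one works with the bound in the $\widetilde{\mathcal{O}}(\sqrt{n}\log(1/\delta))$ form as restated in the paper; any residual $\log n$ factor is absorbed at the cost of a logarithmic correction, consistent with the $\widetilde{\mathcal{O}}$ conventions used throughout. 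The passage between "$\ge$" and "$>$" in the inequalities is immaterial and can be absorbed by an arbitrarily small shrinkage of $\delta$.
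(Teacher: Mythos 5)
Your proposal is correct and follows essentially the same route as the paper's proof: restate the cited high-probability bound as $\widebar{R}(j,n) - R^*_j \leq C n^{-1/2}\log(1/\delta)$ with probability at least $1-\delta$, then invert via $\delta = \exp(-C^{-1}\sqrt{n}\,x)$ to obtain $P[\widebar{R}(j,n) - R^*_j \geq x] \leq \exp(-C^{-1}(n x^2)^{1/2})$. Your additional remarks on the admissibility of $\delta$ for small $x$ and the identification $\beta_j = \nu_j = 1/2$, $C_0 = 0$ are fine and consistent with the paper.
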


\paragraph{ILOVETOCONBANDITS.} \cite[Theorem 2 in][]{agarwalb14} et al. states that $\mathrm{Reg}(n) = \mathcal{O}(\sqrt{n} \log(n / \delta) + \log(n / \delta))$ with probability at least $1 - \delta$. (The proof of their lemma actually states as an intermediate claim a $(1-\delta)$-probability bound on  $\mathrm{CondReg}(n)$ which can easily be shown to be $\mathcal{O}(\sqrt{n} \log(n / \delta) + \log(n / \delta))$ as well). We recast their bound as follows.
\begin{corollary}[Exponential deviation bound for ILOVETOCONBANDITS]\label{corollary:exp_dev_agarwal}
Suppose that assumption \ref{assumption:conditional_sub_gaussian_rewards} holds. Then, there exist $C_0 > 1$, $C_2 > 0$ such that, for any $x \geq 0$,
$P \left[ \widebar{R}(j,n) - R^*_j \geq C_0 n^{-1/2} \log n + x \right] \leq \exp(-C_2 n x^2)$.
\end{corollary}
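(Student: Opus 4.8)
The plan is to pass from the high-probability regret bound of \cite{agarwalb14} to the exponential deviation form in three moves: rewrite everything in terms of $\widebar{R}(j,n) - R^*_j$, isolate the deterministic $\log n$ contribution from the confidence-dependent contribution, and then invert the relation between the confidence level $\delta$ and the deviation $x$. First I would use the identity $\widebar{R}(j,n) - R^*_j = \mathrm{CondReg}(j,n)/n$ recorded in Section \ref{section:problem_setting}, so that the claim becomes a tail bound on $\mathrm{CondReg}(j,n)$, and invoke the intermediate $(1-\delta)$-probability bound on $\mathrm{CondReg}(j,n)$ extracted from the proof of Theorem 2 in \cite{agarwalb14}. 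The point to make precise here is \emph{how} the confidence parameter enters: although the bound is written compactly as $\mathcal{O}(\sqrt{n}\log(n/\delta) + \log(n/\delta))$, the $\log n$ factor comes from a union bound over internal times and is deterministic, whereas the dependence on $\delta$ comes from a sub-Gaussian/martingale concentration and therefore enters through a square root. Making this split explicit, one has, with probability at least $1-\delta$,
\[
\mathrm{CondReg}(j,n) \leq C' \sqrt{n}\,\log n + C'' \sqrt{n \log(1/\delta)} + (\text{lower-order terms}),
\]
the compact form being recovered by the crude bound $\sqrt{\log(1/\delta)} \leq \log(1/\delta)$.

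Dividing by $n$ gives, with probability at least $1-\delta$, $\widebar{R}(j,n) - R^*_j \leq C' n^{-1/2}\log n + C'' n^{-1/2}\sqrt{\log(1/\delta)} + \ldots$. Choosing $C_0 > 1$ large enough to absorb $C'$ together with the $n^{-1}$ lower-order terms (using $n \geq 1$) accounts for the first summand, and it then remains to set $x := C'' n^{-1/2}\sqrt{\log(1/\delta)}$. Solving for $\delta$ yields $\log(1/\delta) = n x^2 / (C'')^2$, hence $\delta = \exp(-C_2 n x^2)$ with $C_2 = 1/(C'')^2$; substituting back gives exactly $P[\widebar{R}(j,n) - R^*_j \geq C_0 n^{-1/2}\log n + x] \leq \exp(-C_2 n x^2)$ for every $x > 0$, with the case $x = 0$ trivial. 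This mirrors Corollary \ref{corollary:exp_dev_UCB} for UCB, the only difference being the $\log n$ (rather than $(\log n)^{1/2}$) in the deterministic term, which reflects the $\log$ sitting outside the square root in the leading regret term of \cite{agarwalb14}.

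The step I expect to require the most care is justifying that the confidence level $\delta$ enters the conditional-regret bound through $\sqrt{\log(1/\delta)}$ rather than linearly. If one uses only the loose compact form $\sqrt{n}\log(n/\delta)$, inversion produces the weaker tail $\exp(-C_2 \sqrt{n}\, x)$ (that is, $\nu_j = 1/2$ in Assumption \ref{assumption:concentration}), which would still suffice for Theorem \ref{thm:regret_master} but not for the stated Gaussian form. Recovering $\exp(-C_2 n x^2)$ therefore hinges on extracting the sharper confidence dependence from the sub-Gaussian concentration underlying the analysis of \cite{agarwalb14} (with Assumption \ref{assumption:conditional_sub_gaussian_rewards} supplying the sub-Gaussianity needed for the Azuma--Hoeffding passage from $\mathrm{Reg}$ to $\mathrm{CondReg}$ where applicable), together with a routine verification that the additive $\log(n/\delta)$ and $n^{-1}$ terms are genuinely lower-order and can be folded into the constants $C_0$ and $C_2$.
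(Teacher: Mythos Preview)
Your approach is the same as the paper's: start from the Agarwal et al.\ high-probability bound, pass from $\mathrm{Reg}$ to $\mathrm{CondReg}$ via Azuma--Hoeffding (Assumption~\ref{assumption:conditional_sub_gaussian_rewards}), split the deterministic $\log n$ contribution from the $\delta$-dependent contribution, and invert. The one substantive difference is precisely the point you flagged as ``requiring the most care'': the paper, after reaching
\[
\widebar{R}(j,n) - R^*_j \leq C n^{-1/2}\sqrt{\log(2n/\delta)} + C n^{-1}\log(2n/\delta) + n^{-1/2}\sqrt{\log(2/\delta)},
\]
loosens $\sqrt{\log(1/\delta)}$ to $\log(1/\delta)$, sets $x = C' n^{-1/2}\log(1/\delta)$, and therefore concludes with the weaker tail $\exp(-C_2\sqrt{n}\,x)$ and deterministic term $C' n^{-1/2}(\log n)^{1/2}$, rather than the $\exp(-C_2 n x^2)$ and $C_0 n^{-1/2}\log n$ announced in the corollary. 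In other words, the paper's own proof lands on exactly the ``loose compact form'' inversion you warned about; your route, keeping the $\sqrt{\log(1/\delta)}$ dependence and inverting $x = C'' n^{-1/2}\sqrt{\log(1/\delta)}$, is the one that actually delivers the Gaussian tail stated. Either tail suffices for Assumption~\ref{assumption:concentration} (with $\nu_j = 1$ versus $\nu_j = 1/2$), so Theorem~\ref{thm:regret_master} is unaffected, but your derivation matches the corollary as written whereas the paper's does not.

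A minor point: your justification for the $\log n$ (as opposed to $(\log n)^{1/2}$) in the deterministic term --- that ``the $\log$ sits outside the square root'' in the leading term of \cite{agarwalb14} --- is not quite the right reason; the paper's starting inequality already has $\sqrt{\log(2n/\delta)}$, so the natural deterministic part is $(\log n)^{1/2}$. This is harmless, since $(\log n)^{1/2} \leq \log n$ and you can simply absorb the slack into $C_0$, but the explanation should be adjusted.
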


\section{Simulation study}\label{section:simulation_study}

We implemented EnsBFC using LinUCB and an $\varepsilon$-greedy algorihtm as base learners, and we evaluated it under two toy environments. We considered the setting $K=2$. We chose environments $\mathcal{E}_1$ and $\mathcal{E}_2$, and the specifications of the two base algorithms such that:
\begin{itemize}
\item the $\varepsilon$-greedy has regret $\mathcal{O}(T^{2/3})$ w.r.t. the value $\mathcal{V}_0(\mathcal{E}_1)$ of the optimal measurable policy under $\mathcal{E}_1$, while LinUCB has linear regret lower bound $\Omega(T)$  w.r.t. $\mathcal{V}_0(\mathcal{E}_1)$,
\item LinUCB has regret $\mathcal{O}(\sqrt{T})$ w.r.t. $\mathcal{V}_0(\mathcal{E}_2)$ while the $\varepsilon$-greedy algorithm has linear regret lower bound $\Omega(T)$ w.r.t. $\mathcal{V}_0(\mathcal{E}_2)$.
\end{itemize}
We present the mean cumulative reward results in figure \ref{fig:mean_cum_rew}. We demonstrate the behavior of the algorithm on a single run in figure \ref{fig:master-1_run} in appendix \ref{section:experimental_details}. We provide additional details about the experimental setting in appendix \ref{section:experimental_details}.

\begin{figure}
\centering
\begin{subfigure}{.5\textwidth}
  \centering
  \includegraphics[width=1\linewidth]{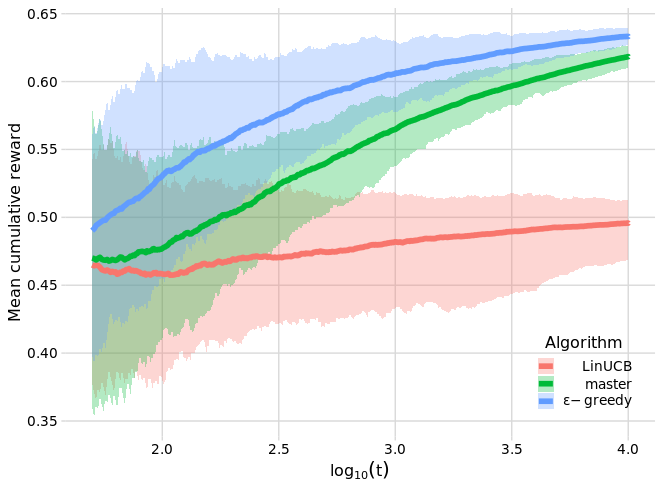}
\caption{Environment 1}
  \label{subfig:mean_cum_rew-env1}
\end{subfigure}%
\begin{subfigure}{.5\textwidth}
  \centering
  \includegraphics[width=1\linewidth]{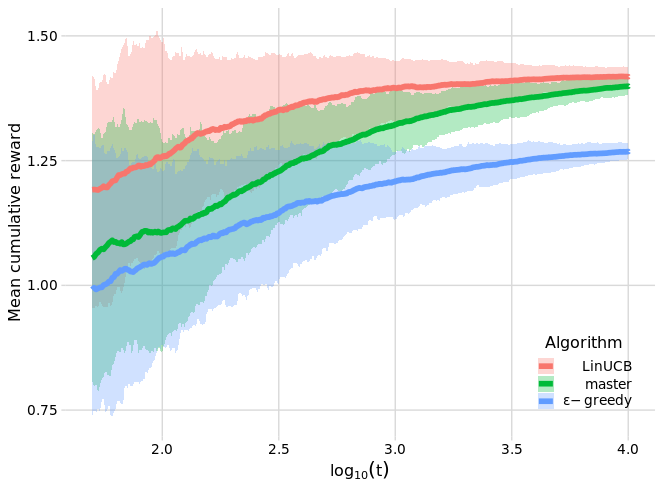}
  \caption{Environment 2}
  \label{subfig:mean_cum_rew-env2}
\end{subfigure}
\caption{Mean cumulative reward of the master and base algorithms over 100 runs, with (10\%,90\%) quantile bands}
\label{fig:mean_cum_rew}
\end{figure}

\section{Discussion}

We provided and analyzed a meta-learning algorithm that is the first proven rate-adaptive model selection algorithm for a general collection of contextual bandit algorithms. The general idea can be expressed in extremely simple terms: compare the performance of base learners at the same internal sample size, and explore uniformly at random with a well chosen decaying rate. Simulations confirm the validity of the procedure.

We commented on the nature of the guarantees of our algorithms and of previous approaches, and argued that they are close to (or coincide, under certain conditions, with) minimax adaptivity guarantees. We believe that further efforts should aim to bring the guarantees of model selection procedures under bandit feedback on par with the guarantees of cross-validation in the full-information setting. This would entail proving asymptotic equivalence with an oracle, which is an instance-dependent form of optimality, as opposed to minimax adaptivity.

\section*{Broader Impact}

Our work concerns the design of model selection / ensemble learning methods for contextual bandits. As it has the potential to improve the learning performance of any system relying on contextual bandits, it can impact essentially any setting where contextual bandits are used.

Contextual bandits are used or envisioned in settings as diverse as clinical trials, personalized medicine, ads placement and recommender systems. We therefore believe the broader impact of our work is positive inasmuch as these applications benefit to society.

\bibliography{biblio}

\appendix

\section{Proof of theorem \ref{thm:regret_master}}

We can without loss of generality assume that the tuning parameters $c_1$ and $c_2$ are set to 1.
The proof of theorem \ref{thm:regret_master} relies on the following lemmas.

\begin{lemma}\label{lemma:indep}
For any $j \in [J]$, and $n, t \geq $, $\underline{n}^{\xplr}(t)$ and $\widetilde{O}(j,n)$ are independent.
\end{lemma}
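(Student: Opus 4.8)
The plan is to show that $\widetilde{O}(j,n)$ — the $n$-th observation fed to base algorithm $j$ — is measurable with respect to a randomness source that is entirely disjoint from the randomness governing $\underline{n}^{\xplr}(t)$. The key observation is that $\underline{n}^{\xplr}(t) = \min_{k\in[J]} n^{\xplr}(k,t)$ depends only on the exploration indicators $D(1),\dots,D(t)$ and the arm-choices $\widehat{J}(\tau)$ made on exploration rounds $\tau\le t$; by construction of EnsBFC, each $D(\tau)$ is drawn independently of $\mathcal{F}(\tau-1)$ from $\mathrm{Bernoulli}(p_\tau)$, and on an exploration round $\widehat{J}(\tau)$ is drawn independently of $\mathcal{F}(\tau-1)$ from $\mathrm{Unif}([J])$. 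So $\underline{n}^{\xplr}(t)$ is a function of the i.i.d. ``allocation'' noise $\{D(\tau),U(\tau)\}_{\tau\le t}$ alone, where $U(\tau)$ denotes the uniform draw used on exploration rounds (on exploitation rounds $\widehat{J}(\tau)$ is a deterministic function of past risk estimates, introducing no new randomness beyond what is already upstream).

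Next I would argue that $\widetilde{O}(j,n) = O(t(j,n))$ is measurable with respect to the context/reward environment noise together with the action-sampling noise and the allocation noise \emph{restricted to the rounds up to and including} $t(j,n)$ — but crucially, the \emph{values} $(\widetilde{X}(j,n),\widetilde{A}(j,n),\widetilde{Y}(j,n))$ depend only on (i) the sequence of contexts and rewards drawn by the environment, which are i.i.d. and independent of all internal randomization, and (ii) the policy $\pi(j,n)$, which is $\widetilde{\mathcal{F}}(j,n-1)$-measurable, i.e. a function of $\widetilde{O}(j,1),\dots,\widetilde{O}(j,n-1)$. Unrolling this recursion, $\widetilde{O}(j,n)$ is a function purely of the first $n$ environment draws that get routed to $j$ and the corresponding $n$ action-sampling noises — none of which involves the allocation noise $\{D(\tau),U(\tau)\}$. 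The cleanest way to formalize this is to posit, a priori, an i.i.d. array of environment triples $\{(X^{(j,m)}, \text{reward table}^{(j,m)})\}_{j,m}$ and action-randomization seeds, indexed by (base algorithm, internal time), and to show by induction on $n$ that $\widetilde{O}(j,n)$ equals a fixed measurable function of $\{(X^{(j,m)},\dots)\}_{m\le n}$ only; this array is independent of the allocation noise by construction.

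The main obstacle is being careful about the coupling: one must set up the probability space so that ``the $n$-th datapoint routed to $j$'' genuinely consumes the $n$-th element of $j$'s private i.i.d. stream regardless of \emph{when} (at which global time) that routing happens — i.e. the environment's context/reward generation must be organized per-(algorithm, internal-time) rather than per-global-time, which is a legitimate reformulation since contexts are i.i.d. and the reward kernel is fixed. Once this ``stacked'' representation is in place, independence is immediate: $\underline{n}^{\xplr}(t)\in\sigma(\{D(\tau),U(\tau)\}_{\tau\ge1})$ and $\widetilde{O}(j,n)\in\sigma(\text{$j$'s private stream})$, and these two $\sigma$-algebras are independent because the underlying generating variables are mutually independent. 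I would also note that $t(j,n)$ itself \emph{is} a function of the allocation noise, but we never need $t(j,n)$ — only the \emph{value} $O(t(j,n))$, which the stacked construction makes allocation-free. I expect the write-up to spend most of its length on this bookkeeping and very little on the final independence conclusion.
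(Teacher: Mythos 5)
Your proposal is correct in outline, but it proves the lemma by a genuinely different route than the paper. The paper stays inside the original probability space and filtration: it first establishes a general claim that, for fixed $t$, the single indicator $U(t)=\Ind(D(t)=1,\widehat{J}(t)=j)$ is independent of $\widetilde{\mathcal{F}}(j,n)$ for every $n$, proving this by induction on $n$ with a case split on whether $t(j,n+1)<t$ or $t(j,n+1)\geq t$ and using the distributional identity between $Y(t)$ given $\mathcal{F}^-(t)$ and $\widetilde{Y}(j,n(j,t))$ given $\widetilde{\mathcal{F}}(j,n(j,t)-1)$; the lemma then follows because $n^{\xplr}(j,t)$ is a sum of such indicators. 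You instead build a ``stacked'' coupling in which each base algorithm has a private i.i.d. stream indexed by its internal time, so that $\sigma(\{D(\tau),U(\tau)\}_\tau)$ and $\sigma(\widetilde{O}(j,1),\widetilde{O}(j,2),\ldots)$ are independent by construction and the conclusion is immediate. Each approach buys something: the paper's argument avoids constructing a new representation of the process, but it is a delicate conditional-probability induction and, as stated, only yields independence of one exploration indicator at a time (the passage from that to independence with the full count $\underline{n}^{\xplr}(t)$, a function of the whole collection of indicators, is left implicit); your construction front-loads the work into verifying that the stacked model is equal in law to the sequential one, but then delivers the stronger \emph{joint} independence of the entire allocation noise from the entire internal filtration of $j$ (and, with jointly constructed streams, across all $j$ simultaneously), which is in fact closer to what the master-regret proof actually invokes when it conditions $\widehat{j}(n)$ on $\underline{n}^{\xplr}(\tau)$. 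The one step you flag but do not carry out — showing that routing the $n$-th datapoint of $j$ to the $n$-th element of $j$'s private stream reproduces the original law, via an induction over global time on the conditional distribution of the newly consumed element given the global past — is the real content of your route and must appear in a full write-up; with it, the argument is sound.
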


The following lemma tells us that the probability of selecting $j$ outside of the set  $\mathcal{J}(1)$ of optimal candidates decrease exponentially with the common internal time of candidates.

\begin{lemma}[Probability of selecting a suboptimal candidate]\label{lemma:proba_suboptimal_selector}
For all $n \geq 1$ and all $j \in [J] \backslash \mathcal{J}(1)$,
\begin{align}
P \left[ \widehat{j}(n) = j \right] \leq C_{3,j} \exp\left(-C_{4,j} n^{\kappa_j}\right),
\end{align}
with $C_{3,j},C_{4,j} > 0$ depending only on the constants of the problem, and $\kappa_j \in [0,1]$.
\end{lemma}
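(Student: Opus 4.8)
The plan is to show that whenever the selector picks a genuinely sub-optimal $j$ at internal time $n$, it must in particular have beaten a single fixed reference algorithm $k\in\mathcal{J}(1)$ (one with $R^*_k=R^*$ and $\beta_k=\beta(1)$; such a $k$ exists since $\mathcal{J}(1)\neq\emptyset$), and that this "upset'' is exponentially unlikely. Taking $c_1=1$ as permitted, the event $\{\widehat{j}(n)=j\}$ forces $\widehat{R}(j,n)+n^{-\beta_j}\le \widehat{R}(k,n)+n^{-\beta(1)}$. I would split each empirical risk as $\widehat{R}(i,n)=\widebar{R}(i,n)+\xi_i$ with $\xi_i:=n^{-1}\sum_{\tau=1}^n\bigl(-\widetilde Y(i,\tau)-R(\pi(i,\tau))\bigr)$, a mean-zero average of conditionally $1$-sub-Gaussian martingale increments. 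Using $\widebar{R}(j,n)-R^*_j\ge 0$ and $R^*_j\ge R^*$ from Assumption \ref{assumption:concentration}, the selection inequality rearranges to
\[ \xi_j-\xi_k \;\le\; \bigl(\widebar{R}(k,n)-R^*\bigr)\;+\;\bigl(n^{-\beta(1)}-n^{-\beta_j}\bigr)\;-\;(R^*_j-R^*). \]
Hence $\{\widehat{j}(n)=j\}$ is contained in the union of the event that $\widebar{R}(k,n)-R^*$ is atypically large (bounded by Assumption \ref{assumption:concentration}) and the event that the martingale gap $\xi_j-\xi_k$ is atypically negative (bounded by Azuma–Hoeffding, via Assumption \ref{assumption:conditional_sub_gaussian_rewards}).

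Next I would split into the two reasons $j$ can fail to be in $\mathcal{J}(1)$. If $R^*_j>R^*$, the constant gap $\Delta_j:=R^*_j-R^*>0$ dominates: for $n$ above a threshold $n_0$ depending only on $C_0,\Delta_j,\beta(1)$ one has $(C_0+1)n^{-\beta(1)}-n^{-\beta_j}\le\Delta_j/4$, so on the event $\{\widebar{R}(k,n)-R^*\le C_0n^{-\beta(1)}+\Delta_j/4\}$ the display above forces $\xi_j-\xi_k\le-\Delta_j/2$. If instead $R^*_j=R^*$ but $\beta_j<\beta(1)$ (the only remaining case, since $\beta(1)=\max_{i\in\mathcal J}\beta_i$ and $\beta(1)\le 1/2$), then $n^{-\beta_j}$ dominates $n^{-\beta(1)}$; choosing $x=\tfrac14 n^{-\beta_j}$ in Assumption \ref{assumption:concentration}, for $n\ge n_0$ (with $n^{\beta(1)-\beta_j}$ large enough) the good-concentration event forces $\xi_j-\xi_k\le-\tfrac12 n^{-\beta_j}$. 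For the finitely many $n<n_0$ I would just bound $P[\widehat j(n)=j]\le 1$ and absorb it into $C_{3,j}$.

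The two concentration estimates then finish the argument. Assumption \ref{assumption:concentration} with $x=\Delta_j/4$ (resp.\ $x=\tfrac14 n^{-\beta_j}$) gives decay $\exp(-c\,n^{\nu_k})$ (resp.\ $\exp(-c\,n^{(1-\beta_j/\beta(1))\nu_k})$, whose exponent is positive because $\beta_j<\beta(1)$). Since $n\xi_j$ and $n\xi_k$ are sums of $n$ conditionally $1$-sub-Gaussian mean-zero increments, Azuma–Hoeffding gives $P[\xi_j-\xi_k\le-u]\le 2\exp(-nu^2/8)$, hence $\exp(-c\,n)$ (resp.\ $\exp(-c\,n^{1-2\beta_j})$, whose exponent is positive precisely because $\beta_j<\beta(1)\le 1/2$). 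Combining yields $P[\widehat j(n)=j]\le C_{3,j}\exp(-C_{4,j}n^{\kappa_j})$ with $\kappa_j=\min\{1,\nu_k\}$ in the first regime and $\kappa_j=\min\{1-2\beta_j,\,(1-\beta_j/\beta(1))\nu_k\}$ in the second, both in $(0,1]$.

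The step I expect to be the main obstacle is justifying the martingale concentration for $\xi_j$: its increments $-\widetilde Y(j,\tau)-R(\pi(j,\tau))$ occur at the random global times $t(j,\tau)$, so I must argue they are mean-zero and conditionally $1$-sub-Gaussian with respect to the \emph{internal} filtration $\widetilde{\mathcal F}(j,\cdot)$, which is generated through stopping times of the master filtration. This is where Assumption \ref{assumption:conditional_sub_gaussian_rewards}, the identity $E[\widetilde Y(j,\tau)\mid\widetilde{\mathcal F}(j,\tau-1)]=-R(\pi(j,\tau))$ from Section \ref{section:problem_setting}, and an optional-stopping argument in the spirit of Lemma \ref{lemma:indep} come in; once that is in place, the remainder is elementary bookkeeping of the thresholds $n_0$ and of which exponent is binding.
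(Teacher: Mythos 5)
Your proposal is correct and follows essentially the same route as the paper's proof: compare against a fixed reference algorithm in $\mathcal{J}(1)$, rearrange the selection inequality using $\widebar{R}(j,n)-R^*_j\geq 0$, split into the two cases $R^*_j>R^*$ and $R^*_j=R^*$ with $\beta_j<\beta(1)$, and bound the resulting events via Assumption \ref{assumption:concentration} for the reference's conditional risk and Azuma--Hoeffding (Assumption \ref{assumption:conditional_sub_gaussian_rewards}) for the empirical-minus-conditional risk martingales, arriving at the same exponents $\kappa_j$. The martingale step you flag is handled in the paper exactly as you suggest (the increments are mean-zero and conditionally $1$-sub-Gaussian with respect to the internal filtration), so there is no gap.
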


\begin{proof}
Suppose that $\widehat{j}(n) = j \in [J] \backslash \mathcal{J}(1)$. Then
\begin{align}
\widehat{R}(j^*,n) + n^{-\beta(1)} \geq \widehat{R}(j,n) + n^{-\beta_j},
\end{align}
which we can rewrite as
\begin{align}
&\left(\widebar{R}(j^*,n) - R^* \right) + \left(\widehat{R}(j^*,n) - \widebar{R}(j^*,n) \right) + \left( \widebar{R}(j,n) - \widehat{R}(j,n) \right) \\
\geq & \left(\widebar{R}(j,n) - R^*_j \right) + \left(R^*_j - R^*\right) + n^{-\beta_j} - n^{-\beta(1)}.
\end{align}
Using that $\widebar{R}(j,n) - R^*_j  \geq 0$, we must then have
\begin{align}
&\left(\widebar{R}(j^*,n) - R^* \right) + \left(\widehat{R}(j^*,n) - \widebar{R}(j^*,n) \right) + \left( \widebar{R}(j,n) - \widehat{R}(j,n) \right) \\
\geq & \left(R^*_j - R^*\right) + n^{-\beta_j} - n^{-\beta(1)}. \label{eq:basic_ineq_selector}
\end{align}
We distinguish two cases. 
\paragraph{Case 1: $j \not\in \mathcal{J}$.} Then, $R^*_j  -R^* \geq \Delta := \min_{j \not\in \mathcal{J}} R^*_j - R^*$, which is strictly positive by definition of $\mathcal{J}$. Denote $\gamma(1) := \max \{\gamma_j : j \in \mathcal{J}(1) \}$ Therefore, for $n \geq n_0$ for some $n_0$ depending only of $\Delta$ and $n^{-\beta(1)}$, we can lower bound the right-hand side of \eqref{eq:basic_ineq_selector} by $\Delta/2$, and we then have that for $n \geq n_0$,
\begin{align}
P \left[ \widehat{j}(n) = j \right] \leq & P \left[ \left(\widebar{R}(j^*,n) - R^* \right) + \left(\widehat{R}(j^*,n) - \widebar{R}(j^*,n) \right) + \left( \widebar{R}(j,n) - \widehat{R}(j,n) \right) \geq \frac{\Delta}{2} \right] \\
\leq & P \left[ \widebar{R}(j^*,n) - R^* \geq \frac{\Delta}{6} \right] + P\left[\widehat{R}(j^*,n) - \widebar{R}(j^*,n) \geq \frac{\Delta}{6} \right]+ P\left[ \widebar{R}(j,n) - \widehat{R}(j,n)  \geq \frac{\Delta}{6} \right].
\end{align}
From assumption \ref{assumption:concentration}, the first term can be bounded as follows:
\begin{align}
P \left[ \widebar{R}(j^*,n) - R^* \geq \frac{\Delta}{6} \right] \leq & C_1 \exp\left(-C_2 \left(n \left(\frac{\Delta}{6} - C_0 n^{-\beta(1)} (\log n)^{\gamma(1)} \right)^{1/\beta(1)}\right)^{\nu_{j^*}} \right) \\
\leq & \widetilde{C}_{3,j} \exp\left(-\widetilde{C}_{4,j} n^{\nu_{j^*}} \right),
\end{align}
for some $\widetilde{C}_{3,j} >0$ and $\widetilde{C}_{4,j} > 0$ that depend only on the constants of the problem.

The other two terms can be upper bounded using Azuma-Hoeffding: observing that for all $j'$, $(\widehat{R}(j',\tau) - \widebar{R}(j^*,\tau))_{\tau \geq 1}$ is a martingale difference sequence, and that from assumption \ref{assumption:conditional_sub_gaussian_rewards}, each of its term is 1-subGaussian conditionally on the past, we have that
\begin{align}
P\left[ \widehat{R}(j^*,n) - \widebar{R}(j^*,n) \geq \frac{\Delta}{6} \right] \leq \exp\left(-n \frac{\Delta^2}{36} \right) \qquad \text{and} \qquad P\left[ \widebar{R}(j,n) - \widehat{R}(j,n) \geq \frac{\Delta}{6} \right] \leq \exp\left(-n \frac{\Delta^2}{36} \right).
\end{align}
Therefore, $P[ \widehat{j}(n) = j] \leq C_{3,j} \exp(-C_{4,j} n^{\kappa_j})$, for some $C_{3,j}, C_{4,j} > 0$ that only depend on the constants of the problem, and $\kappa_j := \min(\nu_{j^*}, 1)$.

\paragraph{Case 2: $j \in \mathcal{J} \backslash \mathcal{J}(1)$.} Then $R^*_j  -R^* = 0$. As $j \in \mathcal{J} \backslash \mathcal{J}(1)$, we have $\beta_j < \beta(1)$, and therefore, for $n \geq n_{1,j}$ that depends only on $\beta(1)$ and $\beta_j$, we have $n^{-\beta_j} - n^{-\beta(1)} \geq n^{-\beta_j}/2$. For any $n \geq n_{1,j}$, we can then lower bound the right-hand side of \eqref{eq:exp_deviation_bound} by $n^{-\beta_j}/2$, and therefore, reasoning as in the previous step, we have that
\begin{align}
&P \left[ \widehat{j}(n) = j \right] \\
\leq & P \left[ \widebar{R}(j^*,n) - R^* \geq \frac{n^{-\beta_j}}{6} \right] + P\left[\widehat{R}(j^*,n) - \widebar{R}(j^*,n) \geq \frac{n^{-\beta_j}}{6} \right]+ P\left[ \widebar{R}(j,n) - \widehat{R}(j,n)  \geq \frac{n^{-\beta_j}}{6} \right] 
\end{align}
From assumption \ref{assumption:concentration}, the first term can be bounded as follows:
\begin{align}
P \left[ \widebar{R}(j^*,n) - R^* \geq \frac{1}{6} n^{-\beta_j} \right] 
\leq C_1 \exp\left(-C_2 \left( n \left(\frac{1}{6} n^{-\beta_j} - C_0 n^{-\beta(1)} (\log n)^{\gamma(1)} \right)^{1/\beta(1)} \right)^{\nu_j} \right).
\end{align}
For $n$ large enough, $n^{-\beta_j} / 6 - C_0 n^{-\beta(1)} (\log n)^{\gamma(1)} \geq n^{-\beta_j} / 12$, and therefore, there exists $\widetilde{C}_{3,j}$ and $\widetilde{C}_{4,j} > 0$ that depends only on the constants of the problem such that 
\begin{align}
P \left[ \widebar{R}(j^*,n) - R^* \geq \frac{1}{6} n^{-\beta_j} \right] \leq \widetilde{C}_{3,j} \exp \left(- \widetilde{C}_{4,j} n^{(1 - \beta_j / \beta(1))\nu_j} \right).
\end{align}
Using Azuma-Hoeffding as in case 1 yields that 
\begin{align}
&P\left[ \widehat{R}(j^*,n) - \widebar{R}(j^*,n) \geq \frac{n^{-\beta_j}}{6} \right]  \leq \exp\left(-\frac{n^{1-2\beta_j}}{36}\right),\\
 \text{and } &P\left[ \widebar{R}(j,n) - \widehat{R}(j,n) \geq \frac{n^{-\beta_j}}{6} \right] \leq  \exp\left(-\frac{n^{1-2\beta_j}}{36}\right).
\end{align}
Therefore, $P[\widehat{j}(n) = j] \leq C_{3,j} \exp(-C_{4,j} n^{\kappa_j})$, with $\widetilde{C}_{3,j}, \widetilde{C}_{4,j} > 0$ depending only on the constants of the problem, and $\kappa_j := \min(1 - 2 \beta_j, (1 - \beta_j / \beta(1)) \nu_j )$. Observe that $\kappa_j > 0$ as $\beta_j < \beta(1) \leq 1/2$ and $\nu_j > 0$.
\end{proof}

We can now prove theorem \ref{thm:regret_master}.

\begin{proof}[Proof of theorem \ref{thm:regret_master}]
Observe that the regret at time of the master w.r.t. $R^*$   can be decomposed as
\begin{align}
\mathrm{Reg}(t) :=& E \left[\frac{1}{t} \sum_{\tau=1}^t Y(\tau) \right] - R^* \\
=& E\left[ \sum_{j \in \mathcal{J}(1)} \frac{n(j,t)}{t} \left(\widebar{R}(j,n(j,t)) - R^* \right) \right] + E \left[ \sum_{j \not\in \mathcal{J}(1)} \frac{n(j,t)}{t} \left(\widebar{R}(j, n(j,t)) - R^*\right) \right].
\end{align}
Observe that for all $1 \leq n \leq t$, $n \widebar{R}(j,n) -R^*= \sum_{\tau=1}^n R(\pi(j,\tau) -R^* \leq \sum_{\tau=1}^t R(\pi(j,\tau)) - R^* = t \widebar{R}(j,t) -R^*$, since the terms in the sums are non-negative. Also, note that $\widebar{R}(j,n(j,t)) - R^* \leq 1$ for all $t$ and $j$. Therefore,
\begin{align}
\mathrm{Reg}(t) \leq \sum_{j \in \mathcal{J}(1)} E \left[\widebar{R}(j,t) - R^* \right] + \sum_{j \not\in \mathcal{J}(1)} \frac{E[n(j,t)]}{t}.
\end{align}
Recall that $n(j,t) = n^{\xplr}(j,t) + n^{\xplt}(j,t)$. It is straightforward to check that $E[n(j,t)^{\xplr}(t)] \leq t^{-\overline{\beta}}/(1-\overline{\beta}) $. We now turn to $E[n(j,t)^{\xplt}(t)]$. We have that
\begin{align}
E[n(j,t)^{\xplt}(t)] =& E\left[\sum_{\tau=1}^t \Ind(\widehat{J}(t) = j, D(\tau) =1) \right] \\
=& E \left[ \sum_{\tau=1}^t \Ind(\widehat{j}(\underline{n}^{\xplr}(\tau) =j, D(\tau) = 1) \right]\\
\leq & E \left[ \sum_{\tau=1}^t \Ind(\widehat{j}(\underline{n}^{\xplr}(\tau) =j) \right] \\
=& \sum_{\tau=1}^t E \left[ P \left[ \widehat{j}(\underline{n}^{\xplr}(\tau) =j \middle| \underline{n}^{\xplr}(\tau) \right]\right] \\
=& \sum_{\tau=1}^t E \left[ \sum_{n=1}^\tau P \left[ \widehat{j}(n) = j \middle| \underline{n}^{\xplr}(\tau) = n \right] \Ind(\underline{n}^{\xplr}(\tau) = n) \right]
\end{align}
From lemma \ref{lemma:indep}, $\widehat{j}(n)$ is independent of $\underline{n}^{\xplr}(\tau)$, and therefore, $P [ \widehat{j}(n) = j | \underline{n}^{\xplr}(\tau) = n ] = P [ \widehat{j}(n) = j ]$. Therefore, using this fact and the bound on $P [ \widehat{j}(n) = j ]$ from lemma 
\begin{align}
E[n(j,t)^{\xplt}(t)] \leq & \sum_{\tau=1}^t \sum_{n=1}^\tau P\left[\widehat{j}(n) = j \right] P \left[\underline{n}^{\xplr}(\tau) = n\right] \\
\leq & \sum_{\tau=1}^t \sum_{n=1}^\tau C_{3,j} \exp\left(-C_{4,j} n^{\kappa_j} \right) P \left[ \underline{n}^{\xplr(\tau)}=n\right] \\
=& \sum_{\tau=1}^t C_{3,j} E\left[ \exp\left(-C_{4,j} \underline{n}^{\xplr}(\tau) \right) \right].
\end{align}
It is straightforward to check that if $\kappa \in [0,1]$, $x \mapsto \exp(-x^\kappa)$ is convex. Therefore, from Jensen's inequality
\begin{align}
E[n(j,t)^{\xplt}(t)] \leq & C_{3,j} \sum_{\tau=1}^t \exp \left( -C_{4,j} E\left[\underline{n}^{\xplr}(\tau) \right] \right) \\
\leq & C_{3,j} \sum_{\tau=1}^t \exp\left(-\frac{C_{4,j}}{J(1 - \overline{\beta})} \tau^{-\overline{\beta}} \right) \\
\leq & C_{5,j},
\end{align}
with $C_{5,j} := C_{3,j} \int_0^\infty \exp\left(-\frac{C_{4,j}}{J(1 - \overline{\beta})} \tau^{-\overline{\beta}} \right) d \tau < \infty$.

Therefore, adding up the bounds on the expected number of exploration and exploitation rounds, we obtain
\begin{align}
E[n(j,t)] \leq \left(  C_{5,j} + \frac{1}{J(1-\overline{\beta})} t^{1-\overline{\beta}} \right) \leq C_{6,j} t^{1-\overline{\beta}},
\end{align}
for some $C_{6,j} > 0$ that depends only on the constants of the problem.

From assumption \ref{assumption:concentration}, for any $j \in \mathcal{J}(1)$, we have $E[\widebar{R}(j,t) - R^*] \leq C_7 t^{-\beta(1)}$ for some $C_6 > 0$. Therefore,
\begin{align}
\mathrm{Reg}(t) \leq & J C_6 t^{-\beta(1)} + \sum_{j=1}^J C_{7,j} t^{-\overline{\beta}} \\
\leq & C t^{-\beta(1)},
\end{align}
for some $C> 0$ that depends only on the constants of the problem.
\end{proof}

\section{Proof of the independence lemma}

We start by stating a more general result of which lemma \ref{lemma:indep} is a corollary.

\begin{lemma}\label{lemma:general_indep_result}
Consider some $j \in [J]$. 
Let, for all $t \geq 1$, $U(t) := \Ind(D(t)=1, J(t) = j)$. Then, for every $n,t \geq 1$, $U(t) {\perp\!\!\!\perp} \widetilde{\mathcal{F}}(j,n)$.
\end{lemma}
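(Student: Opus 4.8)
The plan is to prove Lemma~\ref{lemma:general_indep_result} and then derive Lemma~\ref{lemma:indep} as a corollary, since $\underline{n}^{\xplr}(t)$ is a deterministic function of the $U(\tau)$'s across all $j$ (more precisely, $n^{\xplr}(j,t) = \sum_{\tau=1}^t \Ind(D(\tau)=1, \widehat{J}(\tau)=j)$ and the minimum over $j$ is built from these), while $\widetilde{O}(j,n)$ is $\widetilde{\mathcal F}(j,n)$-measurable. So establishing that each indicator $U(\tau)$ is independent of the internal filtration $\widetilde{\mathcal F}(j,n)$ (and, with a small additional argument, jointly so) gives what we need.

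The key structural observation is that the event $\{D(t)=1,\ \widehat J(t)=j\}$ is, by construction of EnsBFC, decided by fresh randomness drawn independently of the past $\mathcal F(t-1)$: $D(t)\sim\mathrm{Bernoulli}(p_t)$ is drawn independently of $\mathcal F(t-1)$, and conditionally on $D(t)=1$ the index $\widehat J(t)\sim\mathrm{Unif}([J])$ is also drawn independently of $\mathcal F(t-1)$. Hence $U(t)$ is independent of $\mathcal F(t-1)$, with $P[U(t)=1]=p_t/J$. The subtlety is that $\widetilde{\mathcal F}(j,n)$ is \emph{not} a sub-$\sigma$-algebra of any single $\mathcal F(s)$: the internal observation $\widetilde O(j,n)$ lives at the random global time $t(j,n)$, which depends on future exploration decisions. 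I would handle this by arguing on the disjoint events $\{t(j,n)=s\}$ for $s\ge 1$, or more cleanly by induction on $n$ together with a time-decomposition.

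Concretely, the main steps I would carry out are: (i) show $U(t)\perp\mathcal F(t-1)$ for each fixed $t$, directly from the sampling mechanism in Algorithm~\ref{alg:master}; (ii) observe that for $t \le t(j,n)$ we cannot yet have selected $j$ for its $n$-th time, so on those rounds nothing about $\widetilde O(j,n)$ has been revealed, while for the round $t=t(j,n)$ the triple $\widetilde O(j,n)=O(t(j,n))$ is drawn from the fixed context/reward kernels given $\widehat J(t(j,n))=j$, independently of whether that round was an exploration or exploitation round — i.e.\ the value of $D(t(j,n))$ does not affect the law of $\widetilde O(j,n)$; (iii) for $t>t(j,n)$, note $U(t)$ is independent of $\mathcal F(t-1)\supseteq\sigma(\widetilde O(j,1),\dots,\widetilde O(j,n))$. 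Assembling these with the tower property and a union over the value of $t(j,n)$ yields $U(t)\perp\widetilde{\mathcal F}(j,n)$; then collecting over $\tau$ and over the index used to build $\underline{n}^{\xplr}$ gives Lemma~\ref{lemma:indep}.

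The main obstacle I anticipate is step (ii): making rigorous the claim that conditioning on the exploration indicators does not perturb the internal data of algorithm $j$. The cleanest route is to note that the context $X(t)$ is i.i.d.\ and drawn independently of $\mathcal F(t-1)$ and of the current-round randomness $(D(t),\widehat J(t))$, that the action $A(t)$ given $X(t)$ is drawn from the policy $\pi(j,n(j,t-1))(\cdot,X(t))$ — which is $\widetilde{\mathcal F}(j,n-1)$-measurable and in particular does not depend on $D(t)$ — and that $Y(t)$ given $(X(t),A(t))$ comes from the fixed reward kernel. Therefore $\widetilde O(j,n)$ is measurable with respect to $\widetilde{\mathcal F}(j,n-1)$ together with fresh context/reward noise, none of which involves $D(\cdot)$ or the selection randomness used to form $U(\tau)$; one then finishes by an induction on $n$, the base case $n=0$ being trivial since $\widetilde{\mathcal F}(j,0)$ is trivial. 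Once this decoupling of $D$ from the internal data stream is in place, the remaining bookkeeping is routine.
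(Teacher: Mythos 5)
Your plan is essentially the paper's own proof: you establish that $U(t)$ is fresh randomness independent of $\mathcal{F}(t-1)$, proceed by induction on $n$, split according to whether $t(j,n+1)$ occurs before or at/after $t$, and invoke the key decoupling that the conditional law of $\widetilde{O}(j,n)$ given the entire history up to and including the round-$t(j,n)$ selection randomness (the paper's $\mathcal{F}^-(t(j,n))$) depends only on $\widetilde{\mathcal{F}}(j,n-1)$, so the exploration indicator does not perturb algorithm $j$'s internal data stream. The paper carries out your steps (ii)--(iii) precisely by conditioning on the events $\{t(j,n+1)<t\}$ and $\{t(j,n+1)\geq t\}$ inside the conditional probability given $\widetilde{\mathcal{F}}(j,n)$, which is the time-decomposition you anticipate.
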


We can now prove lemma \ref{lemma:indep}. We relegate the proof of lemma \ref{lemma:general_indep_result} after the one of lemma \ref{lemma:indep}.

\begin{proof}[Proof of lemma \ref{lemma:indep}]
Observe that for every $j \in [J]$, $t \geq 1$, $n(j,t):= \sum_{\tau=1}^t \Ind(D(t)=1, J(t) = j)$. Lemma \ref{lemma:general_indep_result} then immediately gives the wished claim.
\end{proof}

\begin{proof}[Proof of lemma \ref{lemma:general_indep_result}]
Let for all $t\geq 1$,$\mathcal{F}^-(t):=\sigma(\mathcal{F}(t-1), D(t), J(t)).$
The hypothesis in the third bullet point can be rephrased as $Y(t)|\mathcal{F}^-(t) \stackrel{d}{=} \widetilde{Y}(j,n(j,t)) | \widetilde{\mathcal{F}}(j,n(j,t)-1)$.

Fix $j$ and $t$.  We denote $U(t):= \Ind(D(t) = 1, J(t)=j)$. Observe that $U(t)$ is $\mathcal{F}^-(t)$-measurable, and that from the first and second conditions, $U(t) {\perp\!\!\!\perp} \mathcal{F}(t-1)$.

 We show by induction that for all $n \geq 1$, $U(t) {\perp\!\!\!\perp} \widetilde{\mathcal{F}}(j,n)$. We treat the base case at the end of the proof. Suppose that for some $n\geq1$, $U(t) {\perp\!\!\!\perp} \widetilde{\mathcal{F}}(j,n)$. Let us show that  $U(t) {\perp\!\!\!\perp} \widetilde{\mathcal{F}}(j,n+1)$. It suffices to show that $U(t) {\perp\!\!\!\perp} \widetilde{Y}(j,n+1) | \widetilde{\mathcal{F}}(j,n)$. Observe that
\begin{align}
&P\left[\widetilde{Y}(j,n+1)=y,U(t)=u \middle| \widetilde{\mathcal{F}}(j,n)\right] \\
=& P\left[\widetilde{Y}(j,n+1)=y,U(t)=u, t(j,n+1)<t \middle| \widetilde{\mathcal{F}}(j,n)\right]  \\
&+ P\left[\widetilde{Y}(j,n+1)=y, U(t)=u, t(j,n+1) \geq t \middle| \widetilde{\mathcal{F}}(j,n)\right].
\end{align}
We start with the first term. We have that 
\begin{align}
&P\left[\widetilde{Y}(j,n+1)=y,U(t)=u, t(j,n+1)<t \middle| \widetilde{\mathcal{F}}(j,n)\right] \\
=& P \left[U(t)=u\middle| \widetilde{Y}(j,n+1)=y,t(j,n+1)<t \middle| \widetilde{\mathcal{F}}(j,n)\right] P \left[ \widetilde{Y}(j,n+1)=y,t(j,n+1)<t , \widetilde{\mathcal{F}}(j,n)\right] \\
=& P \left[U(t)=u\right] P \left[ \widetilde{Y}(j,n+1)=y,t(j,n+1)<t \middle| \widetilde{\mathcal{F}}(j,n)\right]
\end{align}
since $\{\widetilde{Y}(j,n)=y, t(j,n+1) < t \} \cap \widetilde{\mathcal{F}}(j,n)$ is $\mathcal{F}(t-1)$ measurable and $U(t) {\perp\!\!\!\perp} \mathcal{F}(t-1)$. Moreover, observe that $\{t(j,n+1) < t\} \cap \widetilde{\mathcal{F}}(j,n)$ is $\mathcal{F}^-(t(j,n+1))$-measurable, and therefore,
\begin{align}
&P \left[ \widetilde{Y}(j,n+1)=y \middle| t(j,n+1)<t, \widetilde{\mathcal{F}}(j,n)\right] \\
=& E \left[ P \left[ Y(t(j,n+1))=y \middle| \mathcal{F}^-(t(j,n+1))\right] \middle| t(j,n+1)<t, \widetilde{\mathcal{F}}(j,n)\right] \\
=& E \left[ P \left[\widetilde{Y}(j,n+1) \middle| \widetilde{\mathcal{F}}(j,n) \right] \middle| t(j,n+1) < t, \widetilde{\mathcal{F}}(j,n) \right] \\
=& P \left[\widetilde{Y}(j,n+1) \middle| \widetilde{\mathcal{F}}(j,n)\right].
\end{align}
Therefore,
\begin{align}
&P\left[\widetilde{Y}(j,n+1)=y,U(t)=u, t(j,n+1)<t \middle| \widetilde{\mathcal{F}}(j,n)\right]  \\
=& P\left[\widetilde{Y}(j,n+1) \middle| \widetilde{\mathcal{F}}(j,n) \right] P \left[U(t)=u\right] P \left[ t(j,n+1) < t  \middle| \widetilde{\mathcal{F}}(j,n) \right]\\
=& P\left[\widetilde{Y}(j,n+1) \middle| \widetilde{\mathcal{F}}(j,n) \right] P \left[U(t)=u, t(j,n+1) < t  \middle| \widetilde{\mathcal{F}}(j,n) \right],
\end{align}
since $U(t) {\perp\!\!\!\perp} \{ t(j,n+1) < t\}$, as $\{ t(j,n+1) < t\}$ is $\mathcal{F}(t-1)$-measurable, and $U(t) {\perp\!\!\!\perp} \widetilde{\mathcal{F}}(j,n)$ by induction hypothesis, which imply that $U(t) {\perp\!\!\!\perp} \{ t(j,n+1)<t\} | \widetilde{\mathcal{F}}(j,n)$.

We now turn to the second term. Observe that $\{U(t)=u, t(j,n+1) \geq t \} \cap \mathcal{F}(j,n)$ is $\mathcal{F}^-(t(j,n+1))$-measurable. Therefore,
\begin{align}
&P \left[ \widetilde{Y}(j,n+1)=y \middle| U(t)=u, t(j,n+1) \geq t, \widetilde{\mathcal{F}}(j,n) \right] \\
=& E \left[ P \left[ Y(t(j,n+1))=y \middle| \mathcal{F}^-(t(j,n+1)) \right] \middle|  U(t)=u, t(j,n+1) \geq t, \widetilde{\mathcal{F}}(j,n) \right] \\
=& E \left[ P \left[ \widetilde{Y}(j,n+1)=y \middle| \widetilde{\mathcal{F}}(j,n) \right] \middle| U(t)=u, t(j,n+1) \geq t, \widetilde{\mathcal{F}}(j,n+1) \right] \\
=& P \left[ \widetilde{Y}(j,n+1)=y \middle| \widetilde{\mathcal{F}}(j,n) \right].
\end{align}
Therefore, 
\begin{align}
&P\left[\widetilde{Y}(j,n+1)=y, U(t)=u, t(j,n+1) \geq t \middle| \widetilde{\mathcal{F}}(j,n)\right] \\
=& P \left[\widetilde{Y}(j,n+1)=y \middle| \widetilde{\mathcal{F}}(j,n) \right] P\left[U(t)=u, t(j,n+1) \geq t \middle| \widetilde{\mathcal{F}}(j,n) \right].
\end{align}
Therefore, adding up the identities for the two terms,  we have 
\begin{align}
P\left[\widetilde{Y}(j,n+1)=y, U(t)=u \middle| \widetilde{\mathcal{F}}(j,n)\right] = P \left[\widetilde{Y}(j,n+1)=y \middle| \widetilde{\mathcal{F}}(j,n) \right] P\left[U(t)=u \middle| \widetilde{\mathcal{F}}(j,n) \right].
\end{align}
We have thus shown that $\widetilde{Y}(j,n+1) {\perp\!\!\!\perp} U(t) | \widetilde{\mathcal{F}}(j,n)$, which implies that $U(t) {\perp\!\!\!\perp} \widetilde{\mathcal{F}}(j,n+1)$.

The base case can be treated with the same arguments.
\end{proof}

\section{Proofs of reformulations of regret bounds for known base algorithms}

\begin{proof}[Proof of corollary \ref{corollary:exp_dev_UCB}]
As $\widetilde{Y}_{\tau} - E[\widetilde{Y}_\tau| \widetilde{F}_{\tau - 1} ]$ is conditionally $1$-sub-Gaussian with probability at least $1 - \delta / 2$,
\begin{align}
\mathrm{CondReg}(n) \leq \mathrm{Reg}(n) + \sqrt{n \log(2/\delta)},
\end{align}
and thus, using the high-probability regret bound from \cite[lemma 4.9 in]{pacchiano2020model}, there exists $C > 0$ such that, with probability at least $1 - \delta$, 
\begin{align}
\mathrm{CondReg}(n) \leq & C \sqrt{n \log(2 n / \delta)} + \sqrt{n \log(2 / \delta)} \\
\leq & C \sqrt{n \log (2n)}  +  (C +1)\sqrt{n \log (2 /\delta)} \\
\leq & C' \sqrt{n \log n} + C' \sqrt{n \log (1 / \delta)},
\end{align}
for some $C' > C + 1$. 
Let $x =  C \sqrt{n \log(1/ \delta)}$, that is $\delta =\exp(-(C')^{-2} n x^2)$. Recalling that $\widebar{R}(n) - R^* = \mathrm{CondReg}(n) / n$, we thus have that 
\begin{align}
P \left[\widebar{R}(n) -R^* \geq C' n^{-1/2} (\log n)^{1/2} +x \right] \leq \exp\left(-(C')^{-2} n x^2 \right).
\end{align}
\end{proof}

\begin{proof}[Proof of lemma \ref{lemma:exp_dev_eps_greedy}] It suffices to observe that 
\begin{enumerate}
\item The bracketing entropy in any $L_p$ norm is always dominated by the covering entropy in $\|\cdot\|_\infty$ norm.
\item The proof of \cite[theorem 2 in][]{bibaut2020} gives the desired bound on $\widebar{R}(n) - R^*$ as an intermediate result (right before relating it to the regret by using Azuma-Hoeffding).
\end{enumerate}
\end{proof}

\begin{proof}[Proof of corollary \ref{corollary:exp_dev_linucb}] \cite[Theorem 3 in][]{abbasi-yadkori2011} gives that there exists $C > 0$ such that $\widebar{R}(n) - R^* = \mathrm{CondReg}(n) / n \leq C n^{-1/2} \log(1 / \delta)$ with probability at least $1 - \delta$. 
Setting $x = C n^{-1/2} \log (1 / \delta)$, that is $\delta := \exp(-C^{-1} \sqrt{n} x)$, we have that
\begin{align}
P \left[ \widebar{R}(n) - R^* \geq x \right] \leq \exp\left(- C^{-1} \sqrt{n} x \right),
\end{align}
which is the wished claim.
\end{proof}

\begin{proof}[Proof of corollary \ref{corollary:exp_dev_agarwal}]
As $\widehat{Y}_\tau - E[\widehat{Y}_\tau | \widehat{\mathcal{F}}_{\tau-1}]$ is conditionally $1$-sub-Gaussian, Azuma-Hoeffding gives us that, with probability at least $1- \delta/2$,
\begin{align}
\mathrm{CondReg}(n) \leq \mathrm{Reg}(n) + \sqrt{n \log(2/\delta)}.
\end{align}
Therefore, combining this with the claim of \cite[theorem 2 in][]{agarwalb14}, there exists $C > 0$, such that, with probability $1 - \delta$
\begin{align}
\widebar{R}(n) -R^* \leq & C n^{-1/2} \sqrt{\log (2n / \delta)} + Cn^{-1} \log(2n / \delta) + n^{-1/2} \sqrt{\log(2 / \delta)} \\ 
\leq & C \left(n^{-1/2} \sqrt{\log n} + n^{-1} \log n \right) + C n^{-1} \log (2/\delta) + (C+1) n^{-1/2} \sqrt{\log(2/\delta)} \\
\leq & C' \left(n^{-1/2} \sqrt{\log n} + n^{-1/2} \log(1 / \delta) \right),
\end{align}
for some $C' > C$.
Letting $x=C' n^{-1/2} \log (1 / \delta)$, this is equivalent with
\begin{align}
P \left[\widebar{R}(n)- R^* \geq C' n^{-1/2} (\log n)^{1/2} + x \right] \leq \exp\left(-C' \sqrt{n} x \right),
\end{align}
which is the wished claim.
\end{proof}

%

\section{Comment on the need to enforce a lower bound on the estimated risk}\label{section:lower_bound}

Unlike model selection methods such as Lepski's method and cross-validation, our method relies on explicit identification of the index of the best model. It is our understanding that such index identification tasks usually require the existence of a lower bound on the risk of each alternative, so as to ensure a gap in performance between the best and second best learner. Consider for instance the situation where one wants to adaptively estimate in $L_\infty$ norm a density belonging to the union of a collection of Holder balls: $\mathcal{M}_s = H(s, B)$, where $H(s, M):= \{f:\mathbb{R}^d \to \mathbb{R}: |f(x) - f(y)| \leq M |x - y|^{s-\lfloor s \rfloor } \}$. It is well known, that while Lepski	's method is a minimax adaptive procedure with respect to $\{ \mathcal{M}_s : s \in \mathcal{S} \}$, identification of the index $s$ of the smallest Holder class that contains the truth is impossible without additional assumptions that enforce risk lower bounds \citep{gine_nickl_2015}.

A parallel can perhaps be drawn with the best arm identification problem in multi-armed bandit settings: the analysis relies on the  gap in mean reward between the best and second best arm.

Lower bounds of the sort we enforce are intrinsically tied to the minimax framework: they require the knowledge of a rate associated to the model class. Moving beyond the minimax framework to design a meta-learner that performs as well as the best instance-dependent base learner therefore seems to imply that such a procedure must not rely on identifying the index of the best model.

\section{Experimental details}\label{section:experimental_details}

In both environment, contexts are i.i.d. draws from $\mathcal{N}(\bm{0}, \textbf{I}_4)$. 

In environment $1$, the rewards Bernoulli conditional on $A$ and $X$, with conditional means specified as follows: for all $x=(x_1,x_2,x_3,x_4) \in \mathbb{R}^4$,
\begin{align}
E[Y|A=1, X=x] = \begin{cases}
0.1 \text{ if } x_1 < 0 \text{ and  } x_2 <0,\\
0.5 \text{ if } x_1 < 0 \text{ and } x_1 \geq 0,\\
0.7 \text{ if } x_1 \geq 0 \text{ and } x_2 < 0,\\
0.45 \text{ otherwise} 
\end{cases},
\end{align}
and
\begin{align}
E[Y|A=1, X=x] = \begin{cases}
0.8 \text{ if } x_1 < 0 \text{ and  } x_2 <0,\\
0.1 \text{ if } x_1 < 0 \text{ and } x_2 \geq 0,\\
0.3 \text{ if } x_1 \geq 0 \text{ and } x_2 < 0,\\
0.6 \text{ otherwise} 
\end{cases}.
\end{align}

In environment 2, for each $a \in \{1,2\}$, rewards are normally distributed conditional on $X$: $E[Y|A=a, X=x] = \mu_a(x) \rangle + \eta$, with $\eta \sim \mathcal{N}(0,1)$, with
\begin{align}
\mu_1(x) = & 0.9 + 0.5 x_1 + 0.3 x_2  -0.9 x_3  -0.2 x_4 \\
\mu_2(x) = & 0.9  -0.5 x_1 +  0.1 x_2 -0.7 x_3  + 0.6 x_4.
\end{align}

The $\varepsilon$-greedy learner uses as expected reward model the set of functions of the form $(a, x) \mapsto \beta_{a,0} + \beta_{a,1} \Ind(x_1 <0, x_2 < 0) + \beta_{a,2} \Ind(x_1 <0, x_2 \geq 0) + \beta_{a,3} \Ind(x_1 \geq 0, x_2 \geq 0)$. The reward learner therefore converges at a parametric rate to the truth under environment 1. Therefore, by setting the exploration rate to $t^{-1/3}$ at each $t$, regret under environment 1 is $\mathcal{O}(T^{2/3})$ over $T$ rounds. However, this reward model does not contain the truth under environment 2,which implies that the $\varepsilon$-greedy algorithm incurs linear regret w.r.t. $\mathcal{V}_0(\mathcal{E}_1)$.

We use LinUCB with a linear model including all four components of $x$ and an intercept, which implies that the realizability assumption is satisfied under environment 2, and therefore that the regret of LinUCB w.r.t. $\mathcal{V}_0(\mathcal{E}_2)$ is $\mathcal{O}(\sqrt{T})$ over $T$ rounds.

Figure \ref{fig:master-1_run} demonstrates the master algorithm and its two base learners on a single run.

\begin{figure}[H]
\centering
\begin{subfigure}{.5\textwidth}
  \centering
  \includegraphics[width=1\linewidth]{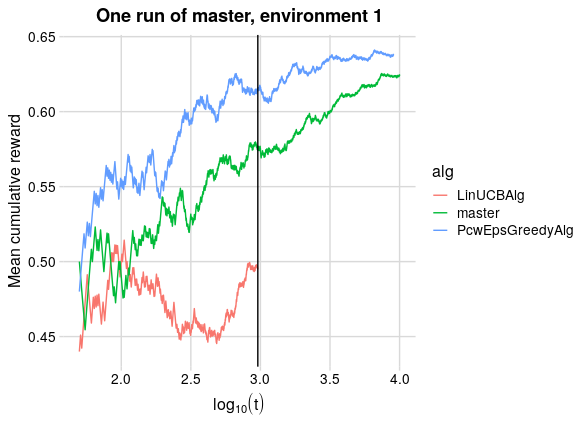}
  \label{fig:sub1}
\end{subfigure}%
\begin{subfigure}{.5\textwidth}
  \centering
  \includegraphics[width=1\linewidth]{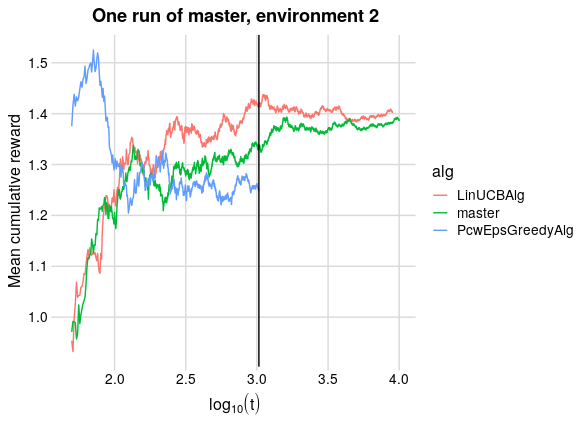}
  \label{fig:sub2}
\end{subfigure}
\caption{Mean cumulative reward of the master and its two base algorithms over 1 run. The vertical black line indicates $\underline{n}^{\xplr}(T)$, with $T$ the final global time in the simulation.}\label{fig:master-1_run}
\end{figure}

We tried the following values for the hyperparameters: $(c_1,c_2) \in \{0.1, 0.5, 1\} \times \{1, 10\}$. All specifications lead to the master appearing to converge to the performance of the optimal algorithm, but some values degrade a the performance in earlier rounds. As pointed out earlier, the specific constant values of $c_1$ and $c_2$ have no impact on the asymptotics.

The results of figure \ref{fig:mean_cum_rew} were generated using an AWS EC2 instance of type r4.8xlarge, with 32 cores and 244 GiB of memory. Each plot takes about 30 minutes to compute. 

The results of figure \ref{fig:master-1_run} were generated on a personal laptop and take less than 5 minutes to compute.
\end{document}